%% file: neurips_2026.tex
\documentclass{article}

\usepackage[preprint]{neurips_2026}

\usepackage[utf8]{inputenc} 
\usepackage[T1]{fontenc}    
\usepackage{hyperref}       
\usepackage{url}            
\usepackage{booktabs}       
\usepackage{amsfonts}       
\usepackage{nicefrac}       
\usepackage{microtype}      
\usepackage{xcolor}         
\usepackage{graphicx}
\usepackage{stmaryrd}
\usepackage{amsmath}
\usepackage{amssymb}
\usepackage{mathtools}
\usepackage{amsthm}
\usepackage{svg}
\usepackage{csquotes}
\newtheorem{proposition}{Proposition}

\usepackage{amsmath}

\title{Prospective Compression in Human Abstraction Learning}

\usepackage{authblk}

\author[1]{Leonardo Hernandez Cano}
\author[2$*$]{Ivan Zareski}
\author[2$*$]{Luisa El Amouri}
\author[3$*$]{Pinzhe Zhao}
\author[2]{Max~Mascini}
\author[1]{Emanuele Sansone}
\author[4]{Yewen Pu}
\author[3$\dagger$]{Bonan Zhao}
\author[2$\dagger$]{Marta Kryven}

\affil[1]{Massachusetts Institute of Technology}
\affil[2]{Dalhousie University}
\affil[3]{University of Edinburgh}
\affil[4]{Nanyang Technological University}

\begin{document}

\maketitle

\footnotetext[1]{Equal contribution.}
\footnotetext[2]{Joint senior authors.}


\begin{abstract}
A core challenge in program synthesis is \textbf{online library
learning}: the incremental acquisition of reusable abstractions under uncertainty about future task demands. Existing algorithms treat library learning as retrospective compression over a static task distribution, where the learned library is determined by the corpus of \textit{past} tasks. 
However, real-world learning domains are often non-stationary, with tasks arising from a generative process that evolves over time.
We propose and test the hypothesis that in non-stationary domains human library learning selects abstractions prospectively: targeting compression of \textit{future} tasks. We study this question using the \textit{Pattern Builder Task}, a visual program synthesis paradigm in which participants construct increasingly complex geometric patterns from a small set of
primitives, transformations, and custom \textit{helpers} that carry forward across trials. Using this task, we conduct two experiments with complementary latent curricula, designed to dissociate between behaviors consistent with prospective compression, and alternative library learning accounts. 
Using six computational models spanning online library learning strategies, we show that human abstraction behavior reflects sensitivity to latent, non-stationary structure in the task-generating process. This behavior is consistent with prospective compression, and cannot be captured by existing retrospective compression-based algorithms, or inductive biases modeled by LLM-based program synthesis.
\end{abstract}

\section{Introduction}

From building a cathedral to learning to play an instrument, people solve complex problems by decomposing them into reusable sub-tasks~\citep{gobet2001chunking,qin2025planning}. 
Library learning formalizes this idea in the context of program synthesis: 
given a corpus of programs, 
it iteratively discovers a library of reusable sub-programs, each chosen to maximally compress the programs seen so far
~\citep{ellis2023dreamcoder,liang2010learning}. 
This approach has been fruitful in modeling abstraction learning in human cognition across domains, including list functions~\citep{liang2010learning,rule2024symbolic}, cognitive maps~\citep{sharma2022mapi,kryven2025cognitive}, and hierarchical planning policies~\citep{correa2025exploring}.


However, many natural problem domains are dynamic, with tasks arising from a non-stationary generative process, rather than a static task distribution assumed by current library learning approaches~\citep{bowers2023top,ellis2023dreamcoder}.
Under such conditions, how should a learner decide which abstractions to form and reuse, given only the tasks seen so far (Fig.~\ref{fig:fig1})? 
Would algorithms based on retrospective compression naturally extend to such non-stationary settings?



\begin{figure}[t]
    \centering
    \includegraphics[width=0.8\textwidth]{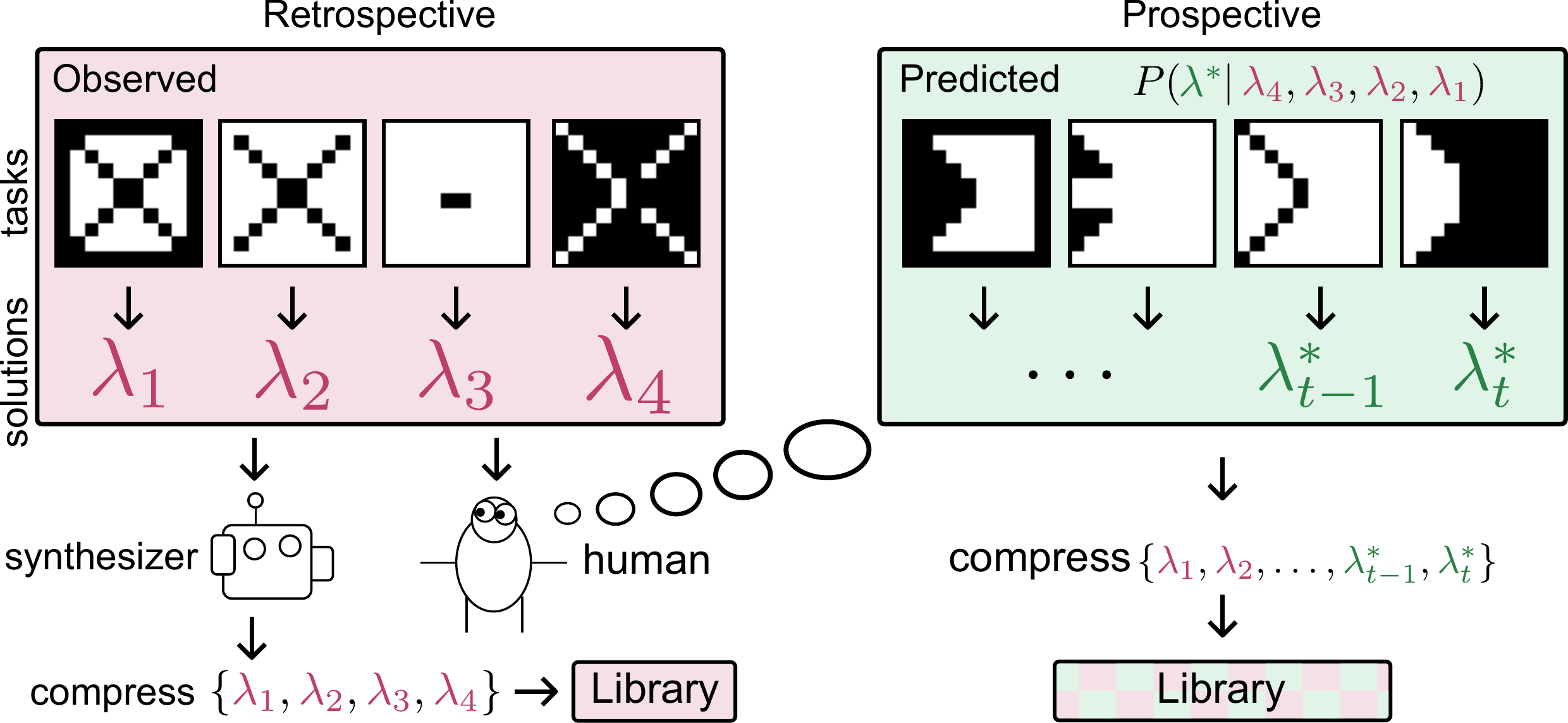}
    \caption{Existing library learning algorithms compress past tasks retrospectively (left). Human abstraction learning is better characterized as prospective compression over the task-generating process to select abstractions that transfer to future tasks (right).}
    \label{fig:fig1}
\end{figure}

In this work, we use human behavior as a diagnostic oracle \citep{lake2015human, acquaviva2022communicating,johnson2021fast} to identify computational principles that underpin online library learning, including under non-stationary conditions. To do this, we use the \textbf{Pattern Builder Task} (PBT), a visual program synthesis paradigm in which learners construct 
geometric patterns using a small set of primitive patterns and transformation operators (Fig.~\ref{fig:task_domain}).  
In PBT, 
solutions are expressed as programs composed of these primitives and transformations, 
and learners may store any intermediate construction as \textit{helpers}, which become available in subsequent trials. Figure \ref{fig:task_domain}(left) shows four examples of patterns constructed in PBT.

We formalize online library learning in PBT as the problem of incrementally constructing a set of helpers (reusable abstractions) while solving a sequence of tasks. 
We consider three computational principles:
(1) \textbf{Compression:} minimizing description length over 
solutions~\citep{bowers2023top,ellis2023dreamcoder},
(2) \textbf{Inductive bias:} structural priors over representations~\citep{kryven2024approximate,kumar2022using},
and
(3) \textbf{Generative inference:} modeling the non-stationary task distribution to anticipate future tasks~\citep{cogsci2026}.

We find that human abstraction decisions are sensitive to the latent generative structure of the task distribution, suggesting that human library learning is driven by prospective compression, and cannot be explained by retrospective compression or inductive bias alone.
In sum, we contribute: 
\begin{enumerate}
    \item a controlled experimental paradigm for online library 
    learning in which latent curricula dissociate prospective from retrospective compression strategies,
    \item 
    a family of computational models instantiating retrospective 
    compression and inductive bias, evaluated against human behavior, and
    \item the first demonstration that human online abstraction 
    learning is driven by prospective compression of the 
    task-generating process, and cannot be explained by retrospective 
    compression or inductive bias alone.
\end{enumerate}

\section{Library Learning}\label{sec:formalism}

We formalize pattern construction as inductive program synthesis within the programming-by-example (PBE) paradigm \cite{gulwani2017program}. A domain-specific language (DSL) defines a space of programs $p \in \mathcal{P}$ composed of geometric primitives $\mathcal{X}$ and transformation operators $\mathcal{T}$.  Let $c$ be a target pattern, to construct the target $c$ means to find a program $p$ such that $\llbracket p \rrbracket = c$.

A \textbf{library} $\mathcal{L}$ extends this base DSL with a finite set of helper programs $\mathcal{H} \subset \mathcal{P}$, which function as primitives in subsequent programs. Given a library $\mathcal{L}$, programs are represented as abstract syntax trees (ASTs), where leaves correspond to primitives $\mathcal{X}$ or helpers $\mathcal{H}$, and internal nodes correspond to transformation
operators $\mathcal{T}$.
Given library $\mathcal{L}$, target pattern $c$ can be solved by $\llbracket p' \rrbracket_{\mathcal{L}} = c$, where $\llbracket \cdot \rrbracket_{\mathcal{L}}$ denotes execution under the library-extended DSL. When $\mathcal{L} = \emptyset$,  $\mathcal{P}_{\mathcal{L}}$ reduces to the base DSL. 

Given a sequence of targets $\mathcal{C} = (c_1, \ldots, c_T)$ with solutions $\mathcal{P}' = \{p_t\}_{t=1}^{T} \subset \mathcal{P}$ where $\llbracket p_t \rrbracket = c_t$ for all $t$, the classic \textbf{library learning problem} is to find a set of helpers $\mathcal{H}^*$ maximizing compression utility~(CU):
\begin{equation}
\mathcal{H}^* = \arg\max_{\mathcal{H} \subset \mathcal{P}} \; 
    \mathrm{CU}(\mathcal{H}, \mathcal{P}'),
\qquad
        \mathrm{CU}(\mathcal{H}, \mathcal{P'}) = 
        \sum_{h \in \mathcal{H}} |h| \cdot \mathrm{occ}_\mathcal{H}(h, \mathcal{P'}),
        \label{eq:CU}
\end{equation}
where $|h|$ denotes the size of helper $h$ (measured in the number of DSL 
primitives in its AST), and $\mathrm{occ}(h, \mathcal{P'})$ denotes the 
number of programs in $\mathcal{P'}$ in which $h$ appears as a sub-program without being a subprogram of another program in $\mathcal{H}$ (this last requirement prevents double counting). Equation~\ref{eq:CU} follows the standard compression utility definition~\citep{bowers2023top, ellis2023dreamcoder}, and reflects the number of total AST nodes saved across all programs. 
In Appendix~\S\ref{si:proof}, we show that finding the optimal single helper is NP-complete even in a restricted flat-program setting.  

Current methods  construct library $\mathcal{L}$  by selecting helpers that maximize the retrospective CU (Eq.~\ref{eq:CU}) over programs seen so far \citep{ellis2023dreamcoder,bowers2023top}. 
In contrast, we hypothesize that human library learning may be solving a \textbf{prospective library learning problem}, which entails 
finding $\mathcal{H}$ that optimizes compression of the entire solution corpus $P^*$, $\mathrm{CU}(\mathcal{H}, \mathcal{P^*})$ (Figure \ref{fig:fig1}). 
Formally, let $\mathcal{C}_{1:t} = (c_1, \ldots, c_t)$ be the corpus of tasks observed 
up to time $t$, with solutions $\mathcal{P}'_{1:t} = \{p_s\}_{s=1}^{t}$,
the prospective library learning problem is to find:
\begin{equation}
    \mathcal{H}^* = \arg\max_{\mathcal{H} \subset \mathcal{P}} \; 
    \mathbb{E}_{P(P^* \mid \mathcal{C}_{1:t})} 
    \left[ \mathrm{CU}(\mathcal{H}, P^*) \right],
    \label{eq:prospective}
\end{equation}
where $P( P^*  \mid \mathcal{C}_{1:t})$ is a distribution over the full solution set $P^* = \{p_1, \ldots, p_{t+k}\}$ for the entire task corpus $\mathcal{C}_{1:t+k}$. 

We test the hypothesis that prospective library learning better captures how people form reusable abstractions by comparing human helper learning behavior in PBT to a family of computational models instantiating retrospective compression and inductive bias, and measure whether human abstraction selection exceeds what these accounts predict.

\begin{figure*}[t]
\centering
\includegraphics[width=\textwidth]{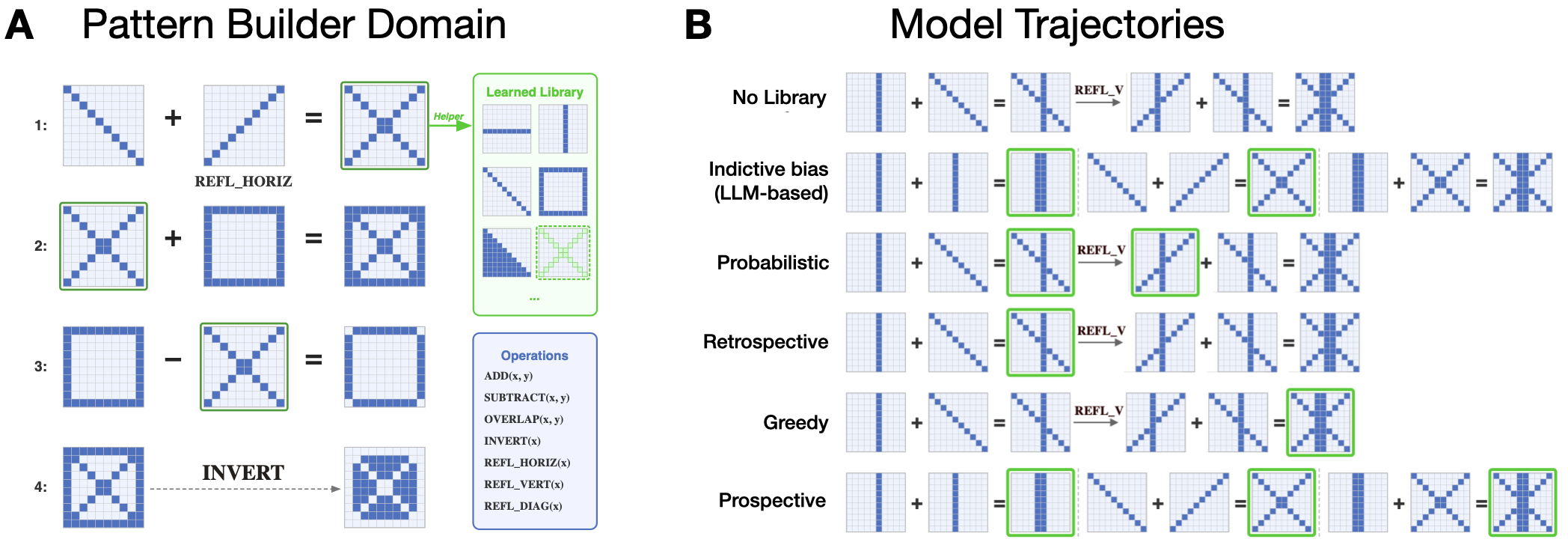}
\caption{ Example tasks in the \textit{The Pattern Builder}. 
\textbf{A} 
Learners are given an initial set of 5 geometric primitives (green box) and seven fixed operations (blue box). 
Primitives can be combined 
using operations, and any new patterns 
can be saved as `helpers' (e.g., the diagonal cross highlighted in green). Helpers carry over to subsequent tasks.
\textbf{B} Example helper-creating strategies across models. Green highlighting indicates abstractions (helpers) promoted to the learned library by each model.  Inductive biases in helper creation select for plausible regular geometric patterns (as modeled by LLM program synthesis). Probabilistic library learning stochastically promotes steps in MDL solutions.
Retrospective (DreamCoder-style) compression  promotes helpers that maximize compression of tasks observed in the past. Greedy compression always promotes the most recent solution, maximizing compression on the current trial.
Helpers promoted prospectively derive from predicting the distribution over future programs.}
\label{fig:task_domain}
\end{figure*}

\section{Computational Models}\label{sec:model}

We evaluate a family of models solving the program synthesis problem in PBT with or without library learning and evaluate which best captures human behavior.

\subsection{Baseline: Bottom-up Program Synthesis (No Library)}

This model performs exhaustive search 
over the base DSL, 
corresponding to the case where library $\mathcal{L} = \emptyset$, and   $\mathcal{P}_{\mathcal{L}}$ reduces to $\mathcal{P}$.  We perform a bottom-up enumeration over program size, constructing hypothesis classes $\mathcal{P}_k$ of programs with size $k$:
\[
\mathcal{P}_{k+1} = \{\, t(p_1,\ldots,p_n) \mid t \in \mathcal{T},\; p_i \in \mathcal{P}_{\le k} \,\}.
\]
Search proceeds in increasing order of $k$, corresponding to a breadth-first traversal of the program space. To reduce redundancy, we prune programs using observational equivalence \citep{albarghouthi2013recursive,udupa2013transit}, retaining a single shortest representative per equivalence class.  
This model solves each task independently via program enumeration, with only a Minimal Description Length (MDL) bias \cite{chater1999search,chater2003simplicity,rule2024symbolic}. 


\subsection{Library Learning with Retrospective Compression}

The second class of models augments bottom-up synthesis with incremental abstraction reuse over static task distributions. After solving each task $i$, program synthesis produces a derivation trace $\mathcal{D}^{(i)} = \{p^{(i)}_1, p^{(i)}_2, \ldots, p^{(i)}_{T_i}\}$,
where $p^{(i)}_{T_i} = p^{(i)}$ is the final solution program and intermediate elements correspond to subprograms constructed during search. We define an abstraction operator that selects a subset of candidate subprograms from the trace to promote to the library:
\[
\mathcal{L}^{(i+1)} = \mathcal{L}^{(i)} \cup \mathcal{A}(\mathcal{D}^{(i)}).
\]

Different instantiations of $\mathcal{A}$ correspond to different assumptions about how compression is performed over derivations.  We consider three variants:

\textbf{Model 2a: Retrospective Compression (RC)}
The abstraction operator follows the same incremental library construction principle as DreamCoder \citep{ellis2023dreamcoder}. 
After solving task $i$, each candidate in the derivation trace $\mathcal{D}^{(i)}$ is scored by its compression utility (Eq.~\ref{eq:CU}) against the current solution corpus $\{p_1, \ldots, p_i\}$. The top-$k$ helpers are selected greedily in descending order of utility (recomputed after adding each helper):
\[
\mathcal{A}_{\mathrm{RC}}(\mathcal{D}^{(i)}) = \operatorname*{arg\,max}_{
\mathcal{H} \subseteq \mathcal{D}^{(i)},\, |\mathcal{H}|=k} 
\mathrm{CU}(\mathcal{H},\, \{p_1, \ldots, p_i\}).
\]
Unlike DreamCoder, which adds a single helper per task, RC adds $k$ helpers per problem.
To better compare RC's compression utility against human participants, we set $k$ to be the mean number of helpers created per task in the behavioral experiments.

\textbf{Model 2b: Greedy Library Learning (GL)}
Model 2a is computationally expensive, and human cognition is often constrained by its computational resources \citep{lieder2020resource,zhao2024model}. We thus consider an abstraction operator that selects only the terminal program, corresponding to a local greedy MDL update over completed programs:
\[
\mathcal{A}_{\mathrm{GLL}}(\mathcal{D}^{(i)}) = \{p^{(i)}_{T_i}\}.
\]

\textbf{Model 2c: Probabilistic Library Learning (PL)}
We further consider an abstraction operator that stochastically selects elements from the derivation trace, inducing a noisy compression process:
\[
\mathcal{A}_{\mathrm{PLL}}(\mathcal{D}^{(i)}) = \{p \in \mathcal{D}^{(i)} \mid z_p = 1,\; z_p \sim \mathrm{Bernoulli}(q)\}.
\]

These three models implement different strategies for promoting helpers, based on different compression criteria over a static task distribution, i.e., without predicting whether a set of helpers might be useful for some unseen, future tasks.

\subsection{Library Learning with Inductive Bias Only (LLM-based)}

The third model family operationalizes inductive bias-driven abstraction learning without an explicit compression objective. We use a large language model (LLM) for program synthesis in a general-purpose language (Python). At each trial $t$, the model is given:
(i) a description of the PBT domain,
(ii) the current target pattern,
(iii) the DSL primitives encoded as Python functions, and
(iv) any previously introduced helper functions. On any trials where the target is not reconstructed correctly, LLM is prompted to refine its response. Any created helpers carry forward to subsequent refinement iterations. Refinement is repeated for up to 5 times.
Here, all models use \textsc{gpt5.2, low reasoning} as a LLM backend, chosen as the minimal version of the \textsc{gpt} family with a sufficient program synthesis capability to complete targets across experiments within 5 refinement steps. Full prompts are given in Appendix \S\ref{si:prompts}.

\textbf{Model 3a: Memoryless (LLM-PS)}
The model is prompted to construct each pattern with access only to the current DSL and helper set. This captures abstraction driven purely by pretrained inductive biases over program structure, without explicit conditioning on task history beyond available helpers.
\textbf{Model 3b: LLM-PS with history (LLM-PS-H)}
The model prompt additionally includes the full sequence of previous tasks and solutions. This allows helper construction to be conditioned on previous tasks to test whether this imparts an additional benefit to recovering shared task structure.

\subsection{Human Library Learning as Prospective Compression}

We define \textit{prospective compression} as a normative principle for abstraction promotion (Figure \ref{fig:fig1}). 
Following Eq.~\ref{eq:prospective}, 
the prospective abstraction operator selects helpers that maximize expected 
compression utility over both the present and anticipated future solutions: 
\[
\mathcal{A}_{\mathrm{PC}}(\mathcal{D}^{(i)}) = 
\operatorname*{arg\,max}_{\mathcal{H} \subseteq \mathcal{D}^{(i)}} \;
\mathbb{E}_{P(\mathcal{P}^* \mid \mathcal{C}_{1:i})}
\!\left[\mathrm{CU}(\mathcal{H},\, \mathcal{P}^*)\right],
\]
where $P(\mathcal{P}^* \mid \mathcal{C}_{1:i})$ is a distribution over the full (including future) solution corpora, conditioned on the tasks observed so far. This definition formalizes our hypothesis that human library learning instantiates prospective compression, and is deliberately agnostic of how $P(\mathcal{P}^* \mid \mathcal{C}_{1:i})$ is specified -- requiring only that helper selection compresses future tasks rather than just the past ones. 

\begin{figure*}[t]
\centering
\includegraphics[width=\textwidth]{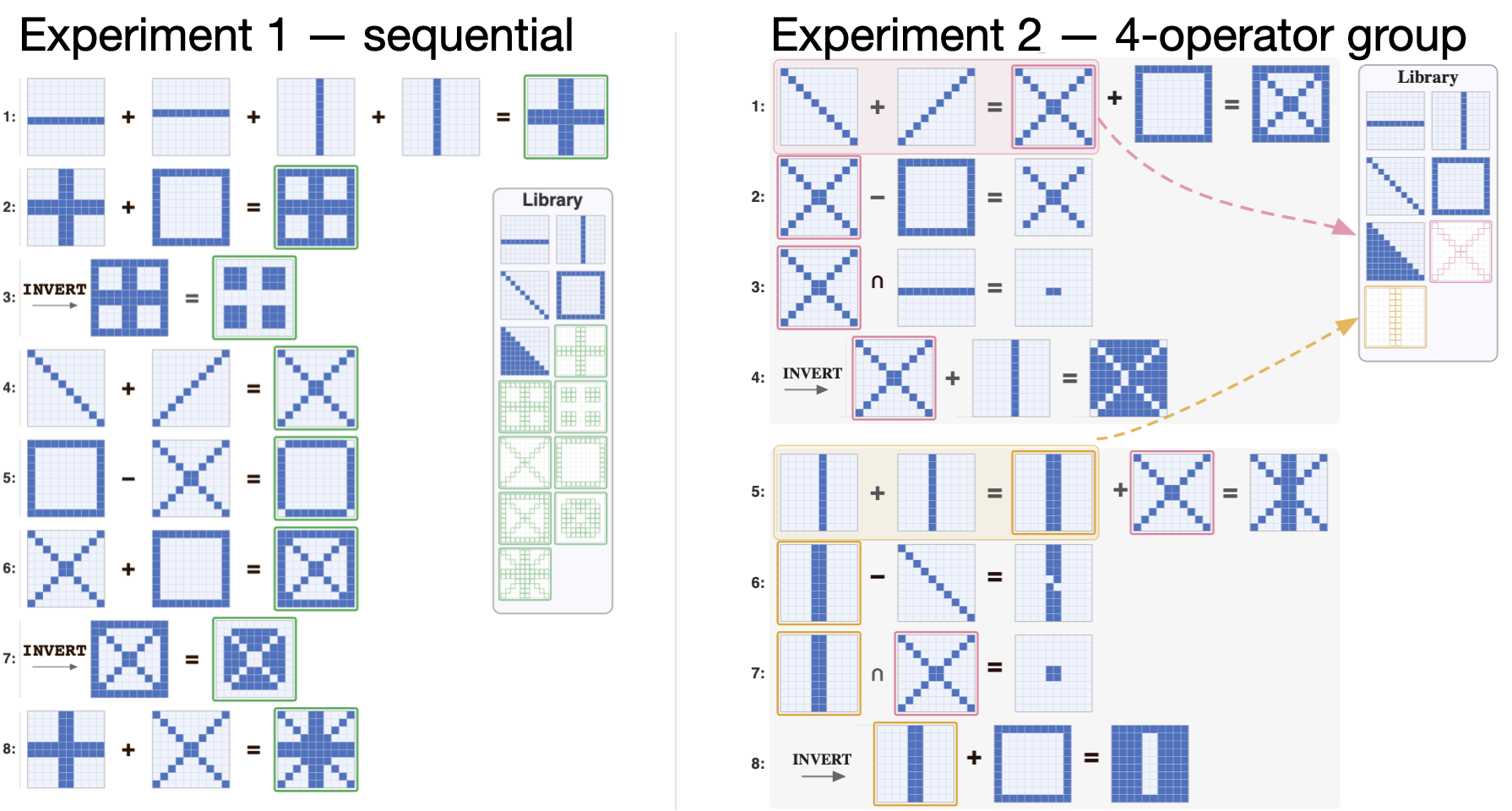}
\caption{Two complementary curricula in Experiments 1 and 2 illustrated by derivations of the first 8 targets from each experiment. Coloured highlights indicate helpers implied by each curriculum. In Experiment 2, gray background panels indicate common operator groups. Each group derives from a fixed set of four transformations (\textsc{+, -, $\cup$, ($\neg$, +)}) applied to a \textit{helper}, shared within the group, and another geometric primitive.}
\label{fig:curr}
\end{figure*}

Figure~\ref{fig:task_domain} illustrates divergent predictions of these models about which abstractions get promoted to the library on any given trial. Given a shared construction task, GL retains only the final solution program, RC selects abstractions to optimize compression of tasks seen so far, and LLM-based models promote helpers consistent with geometric inductive biases, and prospective compression retains helpers which it predicts to be shared by future tasks. 

\section{Experiments}

We evaluate which model best captures library learning strategies in human participants using two complementary latent curricula: a sequential curriculum (Experiment~1), under which retrospective compression is expected to be an effective strategy, and an operator-group curriculum (Experiment~2), designed to be tractable only under prospective compression of the task-generating distribution. In both experiments, we compare the corpus compression utility of human top-$k$ helpers 
against those selected by RC and the LLM-based models (as a proxy for abstractions driven by inductive biases).

\subsection{Curriculum Designs}

An ordered sequence of target patterns $(c_1, c_2, \ldots, c_T)$ with corresponding solution programs $(p_1, p_2, \ldots, p_T)$ forms a \textit{curriculum}. Curricula differ in their meta-structure $\mathcal{G}$: a generative process that imposes structured dependencies on solutions, 
either through preceding solutions or shared latent sub-programs. Figure~\ref{fig:curr} illustrates two possible structures, described below.

Experiment~1 tests a \textbf{sequential} curriculum, where each solution is obtained by applying a transformation $\tau \in \mathcal{T}$ to a preceding solution $p_{t-n}$ and primitive $x \in \mathcal{X}$:
\[
\mathcal{G}_{\mathrm{seq}}: \quad p_{t} = \tau(p_{t-n},\, x), 
\qquad \tau \in \mathcal{T},\ x \in \mathcal{X},\ 0 < n < t,
\]
This makes retrospective compression an effective strategy, as each target can be derived by incrementally extending a prior solution.

In contrast, Experiment~2 follows a generative structure designed to make prospective compression more effective. We say that a group of $K$ consecutive targets $\{c_t, \ldots, c_{t+K-1}\}$ follows a \textbf{shared helper} curriculum with respect to a latent sub-program $h \in \mathcal{P}$ 
if $h$ appears as a sub-program of every solution in the group:
\[
\mathcal{G}_{\mathrm{hlp}}: \quad h \sqsubseteq p_{t+k}, 
\qquad k = 0, \ldots, K-1,
\]
where $\sqsubseteq$ denotes the sub-program relation. A learner who recovers $h$ can represent all $K$ targets compactly, making future solutions predictable up to $K$ trials in advance.

A special case is the \textbf{$K$-operator-group} curriculum, where the $K$ solutions in each group are generated by applying a fixed set of operators to the shared sub-program $h$ and a primitive $x \in \mathcal{X}$. 
In Experiment~2, $K = 4$, giving:
\[
\mathcal{G}_{\mathrm{grp}}: \quad \{p_{t+k}\}_{k=0}^{3} = \bigl\{\,\textsc{Add}(h, x),\ \textsc{Subtract}(h, x),\ \textsc{Overlap}(h, x),\ \textsc{Add}(\textsc{Invert}(h), x)\,\bigr\},
\]
subject to the constraint that $h$ is a parsimonious shared structure: no target in the group can be derived from any other target in the group by a program shorter than that prescribed by $\mathcal{G}_{\mathrm{grp}}$. This constraint ensures that prospective compression dominates 
retrospective compression in Experiment~2, as we verify empirically in \S\ref{sec:results} using corpus compression utility (Eq.~\ref{eq:CU}).

Figure~\ref{fig:curr} shows the first 8 trials from each curriculum, illustrating the distinct helper promotion strategies induced by these designs. Detailed curriculum definitions for both experiments are given in Appendix~\S\ref{si:curriculum}.

\subsection{Human Experiments}

We tested a sequential curriculum in Experiment 1, and an operator-group curriculum in Experiment 2, with human participants (total $N=60$) recruited on Prolific Academic.
Participants used the Pattern Builder Task (PBT) interface to reconstruct $10\times10$ binary patterns by composing programs with a DSL of 6 primitives, 3 binary and 4 unary transformations (Figure~\ref{si:dsl}), with the option to promote any intermediate construction into a persistent helper reusable on subsequent trials (Figure~\ref{fig:task}).
In Experiment~1, $N{=}30$ participants (14 Female; age $36.2 \pm 9.8$) solved a sequential curriculum (14 target patterns, Fig.~\ref{si:e1}),
and in Experiment~2, $N{=}30$ participants (13 Female; age $34.0 \pm 11.0$) solved the operator-group curriculum (16 target patterns, Fig.~\ref{si:e2}).
Details of the behavioral experiments are available in Appendix~\S\ref{app:beh_exp}.

\subsection{Computational and Behavioral Metrics}

We analyze human and model performance using the following metrics:

\textbf{Number of steps.} The number of transformation operations (\textsc{Add, Subtract, Refl\_H, Refl\_V, Refl\_D, Invert, Intersect}) required to complete a pattern. We treat this as an \emph{upper bound} on efficiency for human data, since participants may take exploratory actions or make mistakes, inflating step counts independently of library quality. Moreover, possessing a high-quality library does not guarantee shorter programs, as people tend to deviate from optimal search~\cite{kryven2024approximate}. We therefore use step count only as a coarse efficiency proxy.

\textbf{Library size.} The number of helpers saved in PBT (for humans), or the number of sub-programs (Python functions or sub-ASTs) abstracted by LLM-based models and RC.

\textbf{Top-$k$ helpers ($h^k$).} For deterministic models (i.e., RC), we select the top-$k$ AST sub-trees ranked by compression utility, where $k$ is set to the empirical mean human library size. For human and LLM-based models we take the top-$k$ most frequently created helpers at each trial, where $k$ is the empirical mean library size. The rationale is that helper creation is driven partly by a common generative process and partly by noise (e.g., interface exploration, mistakes). Aggregating across participants and selecting the top-$k$ by frequency isolates the shared signal from individual variation.

\textbf{Corpus compression.} The compression utility of a helper subset $h^k \subseteq \mathcal{H}$ evaluated against the full (held-out) program corpus $\mathcal{C}^*$. This is our \emph{primary} metric for detecting prospective abstraction in human library learning.

\textbf{Oracle helpers.} Helpers derived from ground-truth sub-ASTs up to trial $t$, ranked by compression utility over the full corpus. These represent the helpers RC would select given hindsight, and serve as an upper-bound reference for library quality.

\subsection{Results}\label{sec:results}

\paragraph{PBT is tractable to human participants}
Across both experiments, pattern completion success rate was high (E1: $M = 92.4\%$, $SD = 14.3\%$, 95\% CI $[87.0\%, 97.7\%]$; 
E2: $M = 81.9\%$, $SD = 16.5\%$, 95\% CI $[75.7\%, 88.0\%]$; 
$n = 30$ each). Detailed accuracy plots are given in Appendix~\S\ref{si:behavior}. This result confirms that PBT is tractable for human participants, and motivates filtering out any models that fail to complete the full task sequence. Three models fail on this criterion. The No Library baseline exhausts its computational budget on trial 7 in Experiment~1 and trial 9 in Experiment~2, consistent with the super-exponential growth of the search space established in Appendix~\S\ref{si:search_size}. GL completes the Experiment~1 curriculum but fails from trial 9 onward in Experiment~2, where the operator-group structure requires helpers that greedy local updates do not produce. PL fails on the final four trials of Experiment~1. The remaining three models RC, LLM-PS, and LLM-PS-H complete both task sequences and are compared with human behavior below.

\begin{figure*}[t]
\centering
\includegraphics[width=\textwidth]{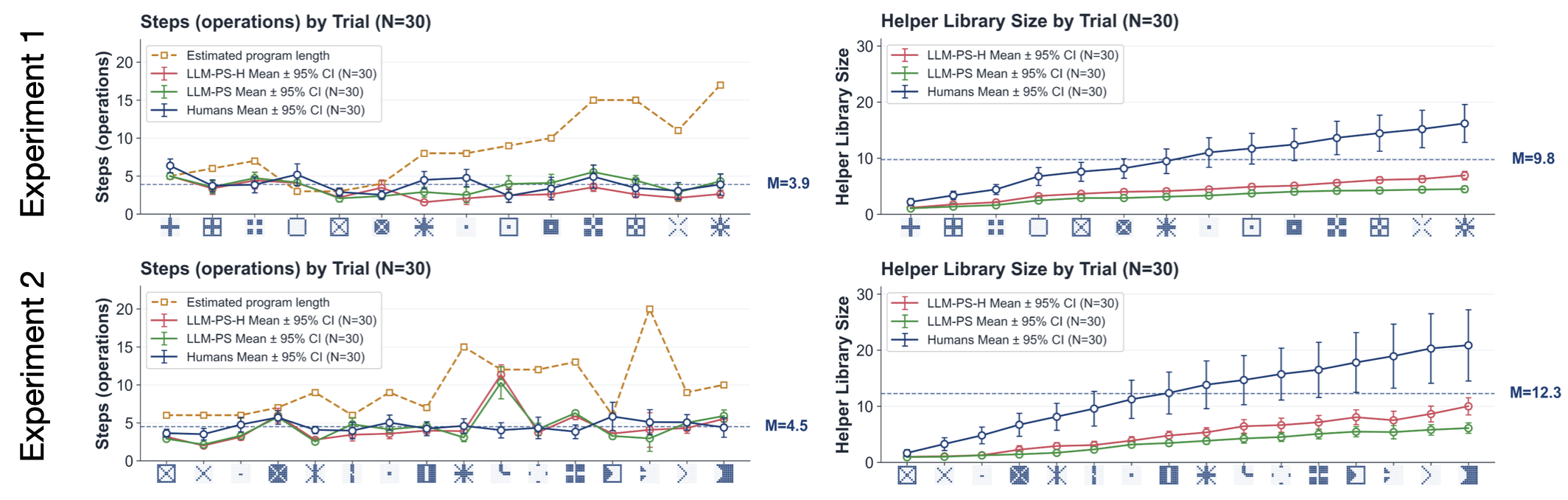}
\caption{(Left) Human solution efficiency, measured in steps to complete a task per trial, compared to estimated program length (AST operation count), and compared to models solution efficiency, across  trials. Raw program length estimate was obtained by expanding the shortest program found by RC into their constituent DSL primitives.  
(Right) The average accumulated size of participants' helper libraries, compared to models' learned libraries, across trials. LLM-based models maintain smaller libraries compared to humans, while mimicking the library growth trend. }
\label{fig:results_behaviour}
\end{figure*}

\paragraph{People use helper libraries to complete tasks in both experiments.}
Figure~\ref{fig:results_behaviour} shows the number of new helpers added by participants per trial, across experiments 
(E1: $M = 1.16$, $SD = 0.67$, 95\% CI $[0.91, 1.41]$;
E2: $M = 1.30$, $SD = 1.11$, 95\% CI $[0.89, 1.72]$). The vast majority of participants created at least
one helper (E1: $28/30$; E2: $30/30$;), confirming that
nearly everyone engaged in library learning. 
The size of participants' helper libraries increased over the course of each experiment (Figure~\ref{fig:results_behaviour}). While library contents varied between individuals (Appendix Figures \ref{si:e1_helpers} and \ref{si:e2_helpers}), there was a significant overlap between individual helper libraries, consistent with helper strategies implied by our experiment design (Figure~\ref{fig:results2}~A).

\paragraph{Library learning reduces search complexity.}
To analyze human number of steps, we exclude trials on which the target was built incorrectly (3\% of total trials).
Across both experiments, people consistently solved tasks in reliably fewer steps than solutions expressed in raw primitives alone (two-sided paired $t$-test aggregated by trial; E1: $M_{\text{human}}=3.93$ vs.\ $M_{\text{prog}}=8.64$, $t(13)=-3.72$, $p=2.6\times10^{-3}$, Cohen's $d_z=-1.00$; E2: $M_{\text{human}}=4.51$ vs.\ $M_{\text{prog}}=9.56$, $t(15)=-5.04$, $p=1.5\times10^{-4}$, $d_z=-1.26$; Figure~\ref{fig:results_behaviour}). 
Further, the length of human-built programs remained roughly flat across trials ($\sim$3--5 steps; E1: slope $=-0.09$ steps/trial, $p=0.22$; E2: slope $=+0.05$, $p=0.19$)
as the raw-primitive program length grew significantly over trial (E1: slope $=+0.92$ ops/trial, $R^2=0.72$, $p=1.4\times10^{-4}$; E2: slope $=+0.47$, $R^2=0.31$, $p=0.025$), consistent with library learning serving to maintain bounded solution depth.


\paragraph{Human abstraction learning tracks prospective compression}
Figure~\ref{fig:results2} shows that in Experiment~1, RC closely 
approximates oracle compression, confirming that its sequential curriculum can be effectively solved by retrospective compression. Human corpus compression similarly approximates oracle helpers in Experiment~1, consistent with both retrospective and prospective accounts under this curriculum.
In contrast, in Experiment~2, where the operator-group curriculum favours prospective compression, corpus compression of human helpers consistently exceeds both RC and LLM-based models, closely tracking oracle compression. This result suggests that retrospective compression alone is insufficient 
to explain human abstraction selection.

LLM-PS and LLM-PS-H recover a subset of geometrically canonical helpers that partly overlap with human helpers (Figure~\ref{fig:results2}), suggesting that inductive bias may contribute to abstraction selection. However, human corpus compression consistently outperforms both LLM-based models across both experiments, indicating that inductive bias alone---as captured by LLM-based synthesis---cannot account for human abstraction behaviour regardless of curriculum structure.

\begin{figure*}[t]
\centering
\includegraphics[width=\textwidth]{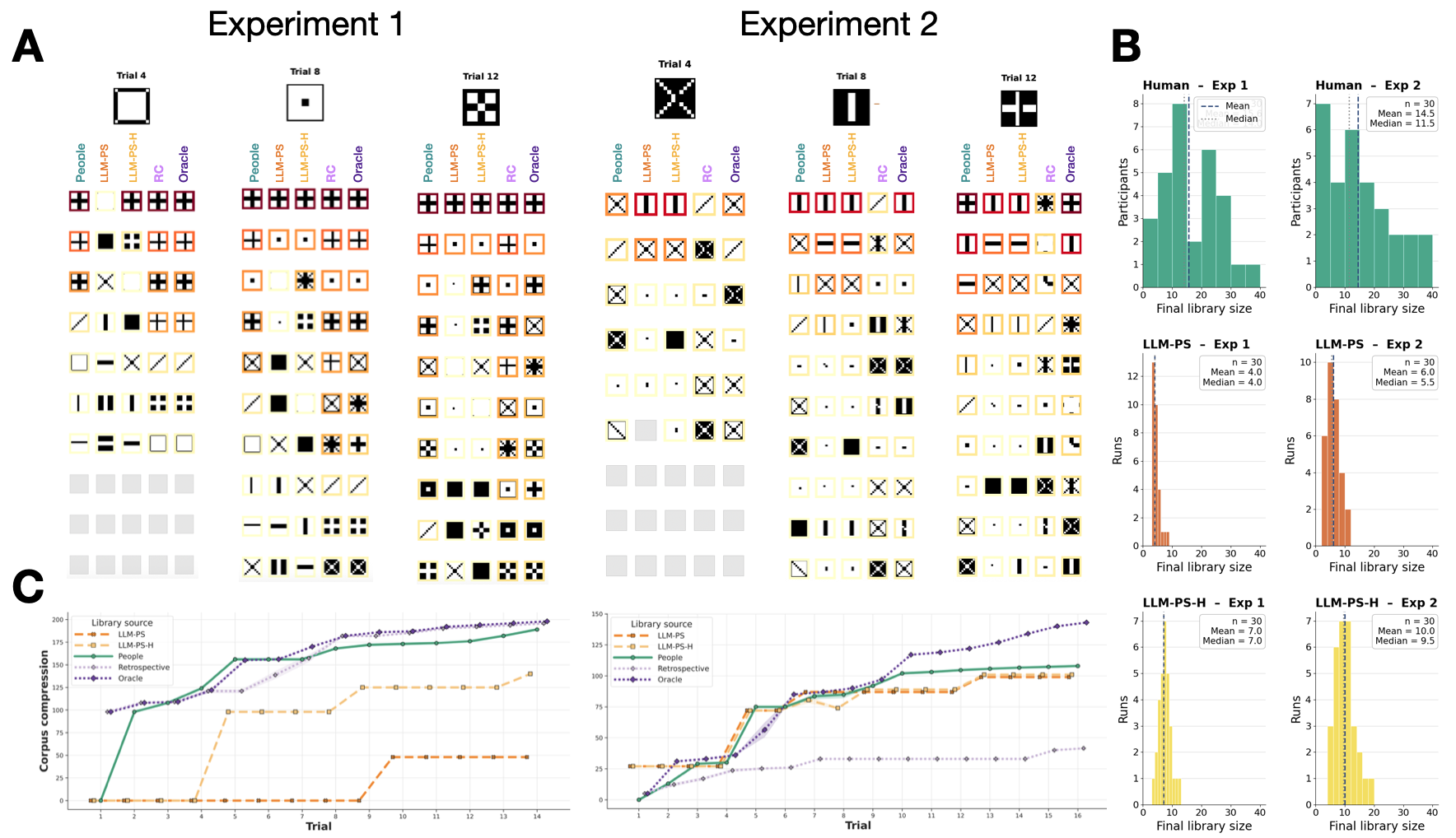}
\caption{Libraries created by people and models.
\textbf{A}. The evolution of human and model top-k helpers over trials, where $k$ is set to mean human library size at a given trial.  \textbf{B}.  Histograms of learned library sizes for people and LLM-based models in both experiments. 
\textbf{C}. Corpus Compression of top-k human helpers compared to models. 
Compression values reflect means computed via randomized  tie-breaking whenever several helpers had equal rank. 
}
\label{fig:results2}
\end{figure*}

\section{Related Work}

\textbf{Library learning in program synthesis}
A central approach to abstraction learning in program synthesis is library learning via compression~\cite{liang2010learning,allamanis2018mining,shin2019program,grand2024lilo}. Systems such as DreamCoder \citep{ellis2023dreamcoder} iteratively compress solution programs into reusable abstractions using a wake--sleep procedure, while recent work such as Stitch \citep{bowers2023top} formulates library induction as greedy MDL optimization over program corpora. \citet{ren2026library} apply a similar compression-based approach to jazz harmonic patterns, integrating deductive parsing with library learning on e-graphs to discover reusable harmonic abstractions from a corpus. These approaches demonstrate strong performance in offline settings with fixed task distributions, but assume a static corpus and rely on computationally expensive global optimization. In contrast, we study \emph{online} library learning, where abstractions must be formed incrementally under uncertainty about future tasks.

\textbf{Behavioral studies of program induction and abstraction}
A growing body of work in cognitive science has used program synthesis frameworks to model human behavior in compositional tasks. \citet{tian2020learning} showed that DreamCoder-style library learning can partially account for human abstraction reuse, though additional biases (e.g., motor efficiency) were required to match behavior. More recent studies have examined abstraction learning in domains such as symbolic list processing \citep{rule2024symbolic,ham2025teaching}, hierarchical planning \citep{qin2025planning, correa2025exploring}, and cognitive maps \citep{sharma2022mapi, kryven2025cognitive}, suggesting that human abstractions reflect structured inductive biases beyond pure compression \citep{he2025bootstrapping}. Our work extends this line by studying abstraction learning \emph{across tasks} in an online setting where the task distribution itself evolves.

\textbf{Inductive biases and neural program synthesis}
LLMs trained on large corpora of code and natural language have demonstrated strong program synthesis capabilities~\citep{chen2021evaluating, austin2021program}, including in visual inductive reasoning tasks~\citep{acquaviva2022communicating, wang2023hypothesis}, and have been proposed as implicit models
of human inductive biases acquired from pretraining~\citep{binz2025foundation,
kryven2025cognitive}. This perspective motivates using LLM-based
synthesis as a computational model of inductive-bias
abstraction in the absence of history (LLM-PS).


\section{Discussion}

We provide converging behavioral and computational evidence that human abstraction learning is better characterized by prospective compression of the observed task corpus, rather than as driven by retrospective compression and inductive biases alone. Across both experiments, our results suggest that human behavior is consistent with inference over latent generative structure of the observed task distribution, which allows people to adapt abstraction choices to future tasks. 
 
\textbf{Implications for program synthesis and cognitive modeling}
For program synthesis, our results suggest that library learning algorithms should move beyond retrospective compression objectives and explicitly model non-stationary task distributions. 
One approach to making abstraction selection sensitive to dependencies between tasks could be implemented via explicit structure tracking over task sequences, for instance using program induction over the generative processes. 
For cognitive science, the results support the view that abstraction learning is tightly coupled to anticipatory structure modeling, where learners not only compress experience, but also infer how task structure evolves over time. This connects library learning to broader accounts of hierarchical prediction and sequence learning in cognitive science \citep{correa2025exploring,kryven2025cognitive, qin2025planning,sharma2022mapi,rule2024symbolic}.

\textbf{Limitations and future work}
Our study focuses on a constrained visual program synthesis domain, which allows us to precisely control experiment design, and analyze behavior in depth under different modeling assumptions. Future work should consider modeling prospective abstraction learning across other domains (i.e. number sequences, communication).
Further, our work focuses on empirically demonstrating  prospective abstraction learning in humans, without implementing a fully generative-process learner that explicitly infers the latent curriculum. A natural next step is to formally model the probability distribution of future tasks and solutions, and to perform joint inference over both the library and the latent task-generating process. 

\textbf{PBT as a candidate benchmark for prospective abstraction}
Beyond its role as a behavioral paradigm, PBT suggests a potential framework for evaluating abstraction learning algorithms. Solving individual PBT tasks does not require online library learning in principle --- a sufficiently strong synthesizer can solve each trial independently via brute-force search or direct generation. However, \emph{matching human abstraction behavior} requires selecting reusable structure that is not necessarily optimal for the current task, but supports efficient transfer across future, structurally related tasks. This distinction operationalizes sensitivity to latent task-generating structure, rather than task-level performance alone. While existing benchmarks such as ARC-AGI~\citep{chollet2019measure} and related sequential or adaptive evaluation suites primarily probe within-task compositional reasoning or episodic generalization, PBT introduces a setting where structure must be inferred and exploited across a temporally evolving task distribution. As such, PBT provides a basis for developing future benchmarks of prospective abstraction, where the central evaluation target is the ability to infer and reuse structure in non-stationary domains.

\section*{Author Contributions}
\textbf{M.K.:} Conceptualization, Methodology, Formal Analysis, Visualization, Writing -- Original Draft, Review \& Editing
Project Administration, Funding Acquisition, Supervision.
\textbf{B.Z.:} Conceptualization, Methodology, Funding Acquisition, Supervision, Writing -- Review \& Editing
\textbf{L.C.:} Software (Compression-based models), Formal Analysis, Writing -- Review \& Editing, Investigation, Visualization
\textbf{I.Z. L.A.:} Software (LLM-based models), Investigation, Visualization
\textbf{P.Z.} Software (human experiment), Formal Analysis (behaviour), Visualization, Writing -- Review \& Editing
\textbf{M.M.:} Formal Analysis (complexity analysis), Investigation, Visualization
\textbf{Y.P.:} Writing -- Review, Visualization
\textbf{E.S.:} Formal Analysis (hardness proofs), Writing -- Review \& Editing

\section{Acknowledgments}
E.S.\ receives funding from the European Research Council (ERC) under the Horizon Europe research and innovation programme (MSCA-GF grant agreement No.\ 101149800). M. K. was supported by NSERC Discovery grant RGPIN-04045-25.





\bibliographystyle{plainnat}

{
\small
\bibliography{refs}
}


\clearpage

\appendix


\section{Hardness of Library Learning}\label{si:proof}
\input{tmp}

\section{Pattern Builder Task}\label{si:task}

\subsection{Domain}
Targets are binary patterns defined on a $10\times10$ grid, where each
cell is either filled or empty. In each trial, subjects and models must
reproduce the target by constructing a program in a fixed
domain-specific language.

\subsection{Domain-Specific Language}
The DSL (Figure~\ref{si:dsl}) consists of six geometric primitives
($\mathcal{X}$): \texttt{blank}, \texttt{line\_horizontal},
\texttt{line\_vertical}, \texttt{diagonal}, \texttt{square} (border
only), and \texttt{triangle}; three binary operators
($\mathcal{T}_{\mathrm{bin}}$): \texttt{add}, \texttt{subtract}, and
\texttt{overlap}; and four unary operators
($\mathcal{T}_{\mathrm{un}}$): \texttt{invert},
\texttt{reflect\_horizontal}, \texttt{reflect\_vertical}, and
\texttt{reflect\_diag}. Programs are represented as trees whose leaves
are primitives or previously defined helpers and whose internal nodes
are transformation operators. Executing a program yields a $10\times10$
binary pattern, and a trial is scored as correct when this output exactly
matches the target.

\subsection{Helper Mechanism}
At any point in a trial, a subject can save the output of a step as a
\emph{helper}. From then on, the helper behaves as a new primitive: it
appears in the helper panel and can be fed to any operator, including
operators that build further helpers. Each helper is shown as a
thumbnail of its pattern; helpers are not named, and identical patterns
are stored only once. The library is private to each subject and
persists across the rest of the main task and the free-play phase.
Subjects can delete helpers at any time. Helpers built during the
tutorial are cleared before the main task starts.

\subsection{Interface}
Each trial shows the target (top-left), a workspace canvas where the
current program is rendered (center), a step list with one line per
program step (right), and the helper library (bottom; see
Figure~\ref{fig:task}). A subject builds the program by picking a
primitive or helper and applying an operator. Each step adds a line to
the program and updates the canvas. Before a step is committed, the
canvas previews its result and the subject can cancel it. Once
committed, a step cannot be removed. Submit is available at any time.

\subsection{Trial Flow}
A trial began with the target shown next to a blank canvas. Subjects
built up a program one step at a time. They could preview each step
before committing it, cancel an uncommitted step, save a step as a
helper, or delete a helper. Committed steps could not be undone, and
there was no time limit. Pressing Submit ended the trial: an exact
match with the target scored 1, anything else scored 0. A feedback
panel then showed the outcome, the number of steps used, and the
cumulative score, after which the next trial began with a fresh
canvas. All subjects saw trials in the same order.

\subsection{Tutorial and Comprehension Check}
Subjects first worked through an interactive tutorial that introduced
the canvas, primitives, operators, and helpers through guided practice.
They then took a four-item multiple-choice quiz covering the goal of a
trial, the scoring rule, helper persistence across trials, and the
preview feature. Any wrong answer sent the subject back to the
instructions to retake the quiz before continuing.

\subsection{Free Play and Debrief}
After the last curriculum trial, subjects entered a free-play block
with no target (Figure \ref{si:freeplay}), in which they could build whatever they liked using the
same DSL and helper library. Creations could be named and submitted to
a gallery. Subjects then completed a debrief questionnaire on
age, gender, perceived difficulty, engagement, helper usefulness, and
free-play enjoyment.

\subsection{Implementation}
The experiment is a static web application (HTML and JavaScript) and
runs in any modern desktop browser. Every action---steps, helper
saves, deletions, submissions, and their timestamps---is logged in the
browser and uploaded to the server at the end of each phase.

\begin{figure*}[t]
\centering
\includegraphics[width=\textwidth]{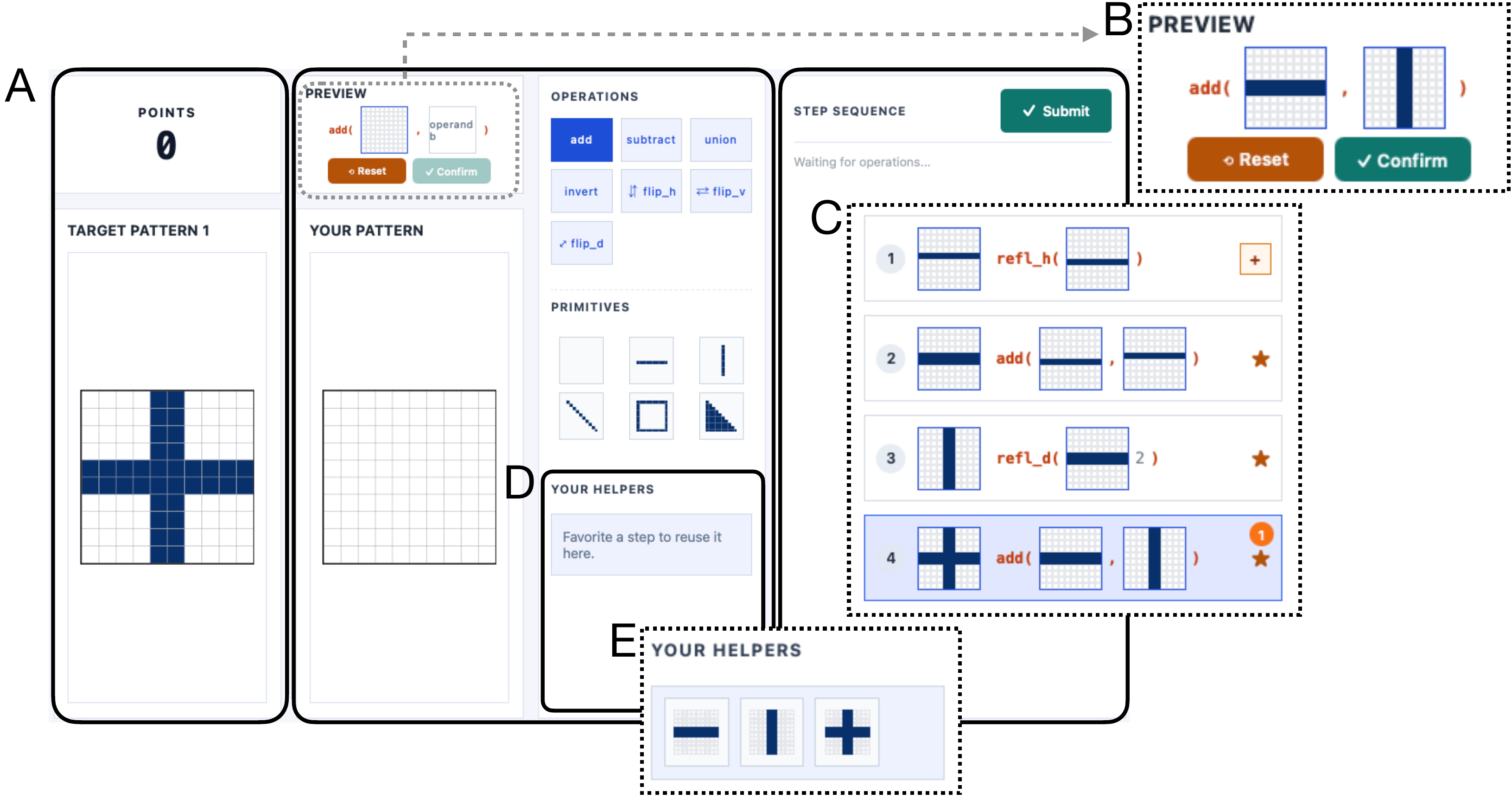}
\caption{Task interface. 
A. Starting view of a trial. Left: Total points so far and the target shape for this trial. 
Middle: Work space, matching the target shape with the provided Operations and Primitive shapes. 
Right: List of steps, each step corresponds to one line of program.
Black borders are only added in the paper for illustration.
B. Example preview.
C. Example programs. Each line has a line number, a thumbnail of the pattern that line creates, and the corresponding program (operations over primitives, steps, or helpers).
D. The initial helper space.
E. Helper space with example helpers.}
\label{fig:task}
\end{figure*}

\begin{figure}[ht]
  \centering
  \includegraphics[width=0.8\textwidth, height=0.45\textheight,keepaspectratio]{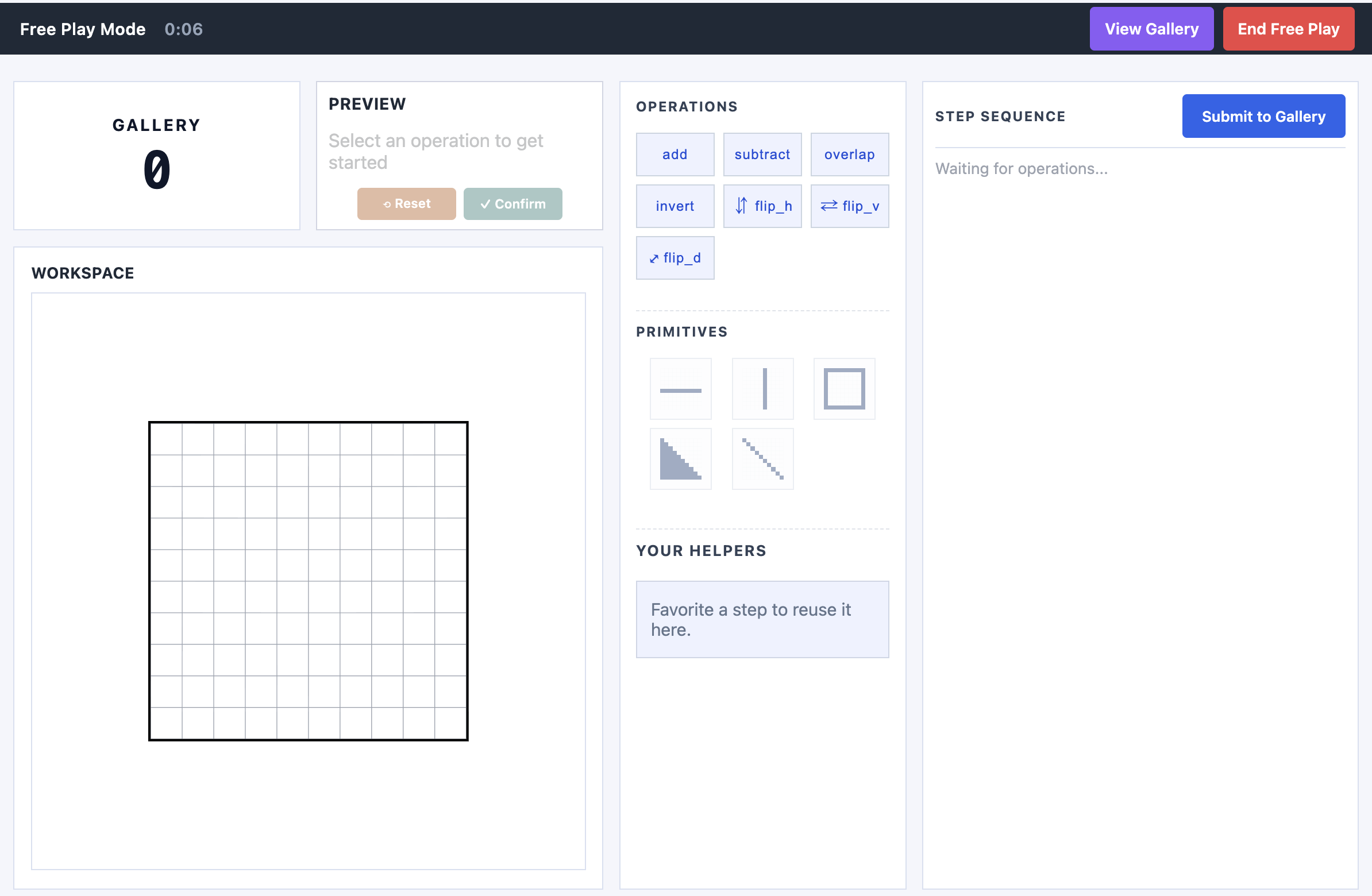}
  \caption{Free-play interface. After the final curriculum trial,
  subjects build any pattern they like using the same DSL and helper
  library, with no target. Creations can be named and submitted to a
  shared gallery.}
  \label{si:freeplay}
\end{figure}

\clearpage

\section{Behavioral Experiments}
\label{app:beh_exp}

\subsection{Pattern Builder Task}
In the Pattern Builder Task (PBT), participants reconstruct $10\times10$ binary patterns by composing programs from a shared domain-specific language (DSL) consisting of 6 primitives, 3 binary operations, and 4 unary transformations (Figure~\ref{si:dsl}). A key feature of the task is that any intermediate construction can be promoted to a persistent helper function, which then becomes reusable across all subsequent trials (Figure~\ref{fig:task}). Both experiments use this same task and DSL; they differ only in the latent generative structure of the target sequence. A full description of the PBT task and interface is provided in Appendix~\S\ref{si:task}.

\subsection{Participants}
Participants were recruited through Prolific Academic and gave informed consent prior to taking part. The study was approved by our institution's ethics committee (reference omitted for anonymity).
In Experiment~1, 30 participants were recruited (14 Female; mean age $36.2 \pm 9.8$ years), of whom 4 were excluded for task disengagement, leaving 26 in the analysis. Participants were compensated at \pounds7.27/hr. In Experiment~2, 30 participants were recruited (13 Female, 1 nonbinary; mean age $34.0 \pm 11.0$ years), with 5 excluded under the same criteria, leaving 25 in the analysis. Compensation was \pounds6.00/hr.

\subsection{Stimuli}
The two experiments used different target sequences, each designed to instantiate a distinct latent structure. In Experiment~1, the 14 targets in Set~1 (Figure~\ref{si:e1}) follow an auto-regressive structure: each target is a DSL transformation of the preceding one, with two long-range dependencies threading through the sequence. In Experiment~2, the 16 targets in Set~2 (Figure~\ref{si:e2}) follow an operator-group structure, in which targets are organized into groups of four that share a latent helper combined with a fixed set of four transformations; consecutive trials are designed to minimize surface similarity, encouraging discovery of the underlying group structure rather than shallow pattern matching. The formal curriculum design is described in Appendix~\S\ref{si:curriculum}.

\subsection{Procedure}
The experiment was conducted entirely in a web browser on participants' own desktop or laptop computers. Each session began with an interactive tutorial followed by a comprehension check to ensure familiarity with the interface and DSL. Participants then worked through the curriculum targets in a fixed order, constructing a solution program step-by-step on a canvas displayed alongside the target pattern. No evaluative feedback was given at any point, and there was no time limit. Following the final trial, participants completed a short free-play block and a debrief questionnaire. 

\clearpage

\section{Behavioral Results}\label{si:behavior}

\subsection{Overall Accuracy across Trials}
\label{si:accuracy}

\begin{figure}[ht]
  \centering
  \includegraphics[width=0.95\textwidth]{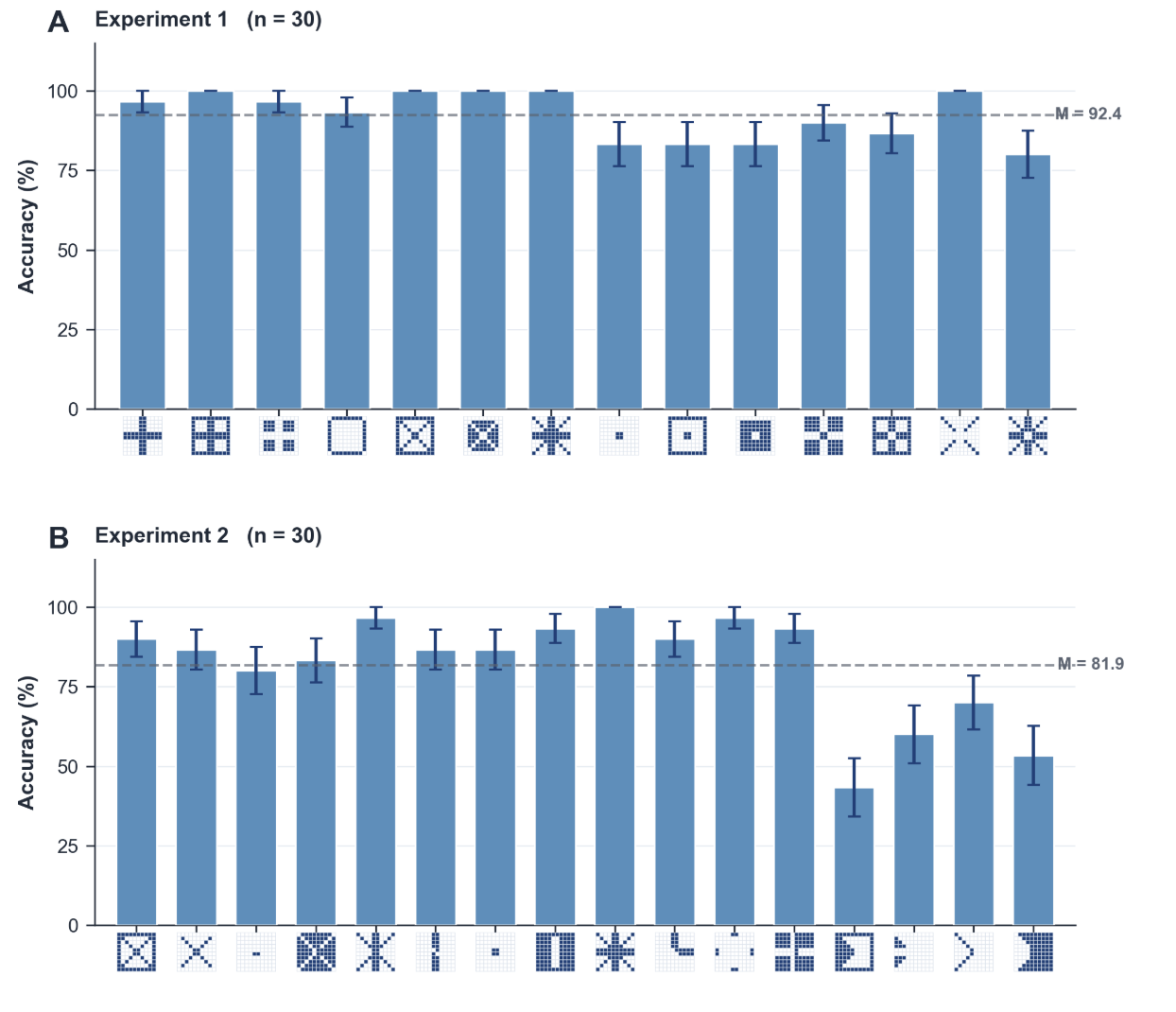}
  \caption{Mean human accuracy by trial in E1 (\textbf{A}) and E2
  (\textbf{B}); $n = 30$ each. Error bars: $\pm 1$ SEM. Dashed line:
  overall mean ($M$). Thumbnails below each bar show the 
  target pattern. }
  \label{si:fig-acc}
\end{figure}





\clearpage

\section{LLM Prompts}\label{si:prompts}

\subsection{Model 3b: LLM-PS with history (LLM-PS-H)}

The main user prompt outlines the task the LLM must follow, specifying what information is provided at each trial and the strict set of rules it must adhere to. This is the history-enabled version of the prompt and therefore includes additional context indicating that the task is part of a sequence of trials. The prompt defines the primitives and transformation functions that the LLM must use as building blocks to reconstruct the target composites. Primitives are represented as 10×10 binary arrays, where 0 denotes an empty cell and 1 denotes a filled cell. Transformations are provided as Python functions that operate on NumPy arrays. In addition, the prompt includes several illustrative examples demonstrating how to use primitives, transformations, and helper functions. These examples show how helper functions can be defined and reused, and each includes a brief description, a target 10×10 array, and a corresponding solution. It's important to note that the example target arrays are distinct from those used in the experimental trials.

\begin{verbatim}
You will be given a sequence of tasks where you will write functions to
produce 10x10 binary arrays.

The tasks will be given one trial at a time, along with the history of
tasks you completed in previous trials.

On each trial, you will be given:
(1) a target 10x10 binary array,
(2) a set of geometric PRIMITIVES as 10x10 binary arrays, and
    TRANSFORMATION operations as Python functions, and
(3) any helper functions you wrote to complete the tasks on previous
    trials.

The PRIMITIVES and TRANSFORMATIONS are stable and shared across trials.
Helper functions carry forward to all subsequent trials and may be reused.

On each trial, you will be given Python starter code with gaps to fill in.

CRITICAL RULES:
1. You must NOT create new primitives, hardcode any array elements in the
   output, redefine any provided variables or functions --- always call
   PRIMITIVES and TRANSFORMATION FUNCTIONS directly.
2. You may NOT use loops, list comprehensions, or import anything.
3. All reconstructions MUST be done using the provided PRIMITIVES,
   TRANSFORMATIONS, and helpers only.
4. Helpers must derive entirely from the provided PRIMITIVES,
   TRANSFORMATIONS and prior helpers.
5. Write Python code only. Do not include any comments or explanations.

PRIMITIVES:

blank = [
    [0, 0, 0, 0, 0, 0, 0, 0, 0, 0],
    ...  (10x10 zeros)
]

line_horizontal = [
    [0, 0, 0, 0, 0, 0, 0, 0, 0, 0],
    [0, 0, 0, 0, 0, 0, 0, 0, 0, 0],
    [0, 0, 0, 0, 0, 0, 0, 0, 0, 0],
    [0, 0, 0, 0, 0, 0, 0, 0, 0, 0],
    [0, 0, 0, 0, 0, 0, 0, 0, 0, 0],
    [1, 1, 1, 1, 1, 1, 1, 1, 1, 1],
    [0, 0, 0, 0, 0, 0, 0, 0, 0, 0],
    [0, 0, 0, 0, 0, 0, 0, 0, 0, 0],
    [0, 0, 0, 0, 0, 0, 0, 0, 0, 0],
    [0, 0, 0, 0, 0, 0, 0, 0, 0, 0]
]

line_vertical = [
    [0, 0, 0, 0, 0, 1, 0, 0, 0, 0],
    [0, 0, 0, 0, 0, 1, 0, 0, 0, 0],
    [0, 0, 0, 0, 0, 1, 0, 0, 0, 0],
    [0, 0, 0, 0, 0, 1, 0, 0, 0, 0],
    [0, 0, 0, 0, 0, 1, 0, 0, 0, 0],
    [0, 0, 0, 0, 0, 1, 0, 0, 0, 0],
    [0, 0, 0, 0, 0, 1, 0, 0, 0, 0],
    [0, 0, 0, 0, 0, 1, 0, 0, 0, 0],
    [0, 0, 0, 0, 0, 1, 0, 0, 0, 0],
    [0, 0, 0, 0, 0, 1, 0, 0, 0, 0]
]

diagonal = [
    [1, 0, 0, 0, 0, 0, 0, 0, 0, 0],
    [0, 1, 0, 0, 0, 0, 0, 0, 0, 0],
    [0, 0, 1, 0, 0, 0, 0, 0, 0, 0],
    [0, 0, 0, 1, 0, 0, 0, 0, 0, 0],
    [0, 0, 0, 0, 1, 0, 0, 0, 0, 0],
    [0, 0, 0, 0, 0, 1, 0, 0, 0, 0],
    [0, 0, 0, 0, 0, 0, 1, 0, 0, 0],
    [0, 0, 0, 0, 0, 0, 0, 1, 0, 0],
    [0, 0, 0, 0, 0, 0, 0, 0, 1, 0],
    [0, 0, 0, 0, 0, 0, 0, 0, 0, 1]
]

square = [
    [1, 1, 1, 1, 1, 1, 1, 1, 1, 1],
    [1, 0, 0, 0, 0, 0, 0, 0, 0, 1],
    [1, 0, 0, 0, 0, 0, 0, 0, 0, 1],
    [1, 0, 0, 0, 0, 0, 0, 0, 0, 1],
    [1, 0, 0, 0, 0, 0, 0, 0, 0, 1],
    [1, 0, 0, 0, 0, 0, 0, 0, 0, 1],
    [1, 0, 0, 0, 0, 0, 0, 0, 0, 1],
    [1, 0, 0, 0, 0, 0, 0, 0, 0, 1],
    [1, 0, 0, 0, 0, 0, 0, 0, 0, 1],
    [1, 1, 1, 1, 1, 1, 1, 1, 1, 1]
]

triangle = [
    [1, 0, 0, 0, 0, 0, 0, 0, 0, 0],
    [1, 1, 0, 0, 0, 0, 0, 0, 0, 0],
    [1, 1, 1, 0, 0, 0, 0, 0, 0, 0],
    [1, 1, 1, 1, 0, 0, 0, 0, 0, 0],
    [1, 1, 1, 1, 1, 0, 0, 0, 0, 0],
    [1, 1, 1, 1, 1, 1, 0, 0, 0, 0],
    [1, 1, 1, 1, 1, 1, 1, 0, 0, 0],
    [1, 1, 1, 1, 1, 1, 1, 1, 0, 0],
    [1, 1, 1, 1, 1, 1, 1, 1, 1, 0],
    [1, 1, 1, 1, 1, 1, 1, 1, 1, 1]
]

TRANSFORMATION FUNCTIONS:

def add(a, b):
    return np.logical_or(a, b).astype(int)

def subtract(a, b):
    return np.logical_and(a, np.logical_not(b)).astype(int)

def intersect(a, b):
    return np.logical_and(a, b).astype(int)

def invert(a):
    return np.logical_not(a).astype(int)

def reflect_horizontal(a):
    return np.flipud(a)

def reflect_vertical(a):
    return np.fliplr(a)

def reflect_diag(a):
    return a.T

EXAMPLE:

Using add(a, b) transformation

Target:
[[0,0,0,0,0,1,0,0,0,0], 
[0,0,0,0,0,1,0,0,0,0], 
[0,0,0,0,0,1,0,0,0,0],
[0,0,0,0,0,1,0,0,0,0], 
[0,0,0,0,0,1,0,0,0,0], 
[1,1,1,1,1,1,1,1,1,1],
[0,0,0,0,0,1,0,0,0,0], 
[0,0,0,0,0,1,0,0,0,0], 
[0,0,0,0,0,1,0,0,0,0],
[0,0,0,0,0,1,0,0,0,0]]

Solution:
def reconstructed():
    return add(line_horizontal, line_vertical)
\end{verbatim}

\subsection{Current trial prompt}

The 10×10 target array that the LLM is tasked with reconstructing for the current trial is appended to the end of the prompt. In this example, the current trial corresponds to Trial 3 of Experiment 1:

\begin{verbatim}
This is trial 3 of 14.

Target:
[
[0, 0, 0, 0, 0, 0, 0, 0, 0, 0],
[0, 1, 1, 1, 0, 0, 1, 1, 1, 0],
[0, 1, 1, 1, 0, 0, 1, 1, 1, 0],
[0, 1, 1, 1, 0, 0, 1, 1, 1, 0],
[0, 0, 0, 0, 0, 0, 0, 0, 0, 0],
[0, 0, 0, 0, 0, 0, 0, 0, 0, 0],
[0, 1, 1, 1, 0, 0, 1, 1, 1, 0],
[0, 1, 1, 1, 0, 0, 1, 1, 1, 0],
[0, 1, 1, 1, 0, 0, 1, 1, 1, 0],
[0, 0, 0, 0, 0, 0, 0, 0, 0, 0]
]
\end{verbatim} 

\subsection{Starter code prompt}

The starter code, comprising the empty \verb|reconstructed| function along with any previously defined helpers, is then appended. This portion of the prompt informs the LLM which helper functions are available, specifically those that were generated and successfully used in prior trials. In this example, the helper function \verb|make_thick_plus| was generated in an earlier trial and carried forward:

\begin{verbatim}
Do not include any comments or imports in your response. 
Respond by completing the following code:

--- Start ---

# You previously found these helpers useful (remove comment)

def make_thick_plus():
    h = line_horizontal
    v = line_vertical
    return add(add(h, reflect_horizontal(h)), add(v, reflect_vertical(v)))

# Define any new helpers here (remove comment)

def reconstructed():
    # Your code here (remove comment)

--- End ---
\end{verbatim}

\subsection{Trial history prompt (LLM-PS-H only)}

The history of target composites from all prior trials, along with an indication of whether the model successfully reconstructed each corresponding composite, is then appended. In this example, the model is on Trial 3 of Experiment 1, so the target composites from Trials 1 and 2 are included:

\begin{verbatim}
Below is the history of figures you've built on previous trials.

Target 1:

[
[0, 0, 0, 0, 1, 1, 0, 0, 0, 0],
[0, 0, 0, 0, 1, 1, 0, 0, 0, 0],
[0, 0, 0, 0, 1, 1, 0, 0, 0, 0],
[0, 0, 0, 0, 1, 1, 0, 0, 0, 0],
[1, 1, 1, 1, 1, 1, 1, 1, 1, 1],
[1, 1, 1, 1, 1, 1, 1, 1, 1, 1],
[0, 0, 0, 0, 1, 1, 0, 0, 0, 0],
[0, 0, 0, 0, 1, 1, 0, 0, 0, 0],
[0, 0, 0, 0, 1, 1, 0, 0, 0, 0],
[0, 0, 0, 0, 1, 1, 0, 0, 0, 0]
]

Built Correctly.

Target 2:

[
[1, 1, 1, 1, 1, 1, 1, 1, 1, 1],
[1, 0, 0, 0, 1, 1, 0, 0, 0, 1],
[1, 0, 0, 0, 1, 1, 0, 0, 0, 1],
[1, 0, 0, 0, 1, 1, 0, 0, 0, 1],
[1, 1, 1, 1, 1, 1, 1, 1, 1, 1],
[1, 1, 1, 1, 1, 1, 1, 1, 1, 1],
[1, 0, 0, 0, 1, 1, 0, 0, 0, 1],
[1, 0, 0, 0, 1, 1, 0, 0, 0, 1],
[1, 0, 0, 0, 1, 1, 0, 0, 0, 1],
[1, 1, 1, 1, 1, 1, 1, 1, 1, 1]
]

Built Correctly.
\end{verbatim}

\subsection{Refinement feedback prompt}

If the model generates code that does not exactly match the current target composite, an iterative refinement feedback prompt is appended in the subsequent iteration. This prompt includes the number of remaining attempts (5 attempts total), the incorrect program produced by the model, the resulting array it generated, and the target array for comparison:

\begin{verbatim}
Note: Your previous response was incorrect. 
This was attempt 1 of 5. You have 4 attempt(s) remaining.

Your code:
<incorrect LLM program here>

This is what your code produced:
<10x10 array produced by the incorrect program above>

This is the target you need to produce:
<current 10x10 target array>

Please try again.
\end{verbatim}

\subsection{Model 3a: Memoryless (LLM-PS)}

The memoryless LLM-PS model (LLM-PS) uses the same base prompts as the history-enabled LLM-PS-H model but omits any history-specific information, such as statements indicating that the task is part of a sequence. Additionally, the trial history prompt is excluded, meaning the model has no awareness of prior trials and operates solely on the current task.

\section{Examples of LLM Produced Reconstructions}

\subsection{Simple reconstruction - Experiment 1}

Composite 1 sample reconstruction:

\begin{verbatim}
	def make_double_vertical():
	    v1 = line_vertical
	    v2 = reflect_vertical(v1)
	    return add(v1, v2)

	def make_double_horizontal():
	    h1 = line_horizontal
	    h2 = reflect_horizontal(h1)
	    return add(h1, h2)

	def reconstructed():
	    return add(make_double_vertical(), make_double_horizontal())
\end{verbatim}

\subsection{Complex reconstruction - Experiment 1}

Composite 14 sample reconstruction:

\begin{verbatim}
	def make_double_horizontal():
	    return add(line_horizontal, reflect_horizontal(line_horizontal))

	def make_double_vertical():
	    v1 = line_vertical
	    v2 = reflect_vertical(v1)
	    return add(v1, v2)

	def make_x():
	    d1 = diagonal
	    d2 = reflect_vertical(d1)
	    return add(d1, d2)

	def make_corners():
	    return intersect(make_x(), square)

	def make_center_square():
	    return intersect(make_double_horizontal(), make_double_vertical())

	def make_full():
	    return invert(blank)

	def make_no_middle_rows():
	    return subtract(make_full(), make_double_horizontal())

	def make_vertical_no_middle():
	    return intersect(make_double_vertical(), make_no_middle_rows())

	def make_middle_rows():
	    return invert(make_no_middle_rows())

	def make_middle_rows_no_center():
	    return intersect(make_middle_rows(), invert(make_double_vertical()))

	def make_center_block():
	    return intersect(make_double_vertical(), make_middle_rows())

	def make_x_hole_center():
	    return subtract(make_x(), make_center_block())

	def reconstructed():
	    return add(make_x_hole_center(), add(make_vertical_no_middle(), make_middle_rows_no_center()))
\end{verbatim}

\subsection{Simple reconstruction - Experiment 2}

Composite 2 sample reconstruction:

\begin{verbatim}
	def make_x():
	    d1 = diagonal
	    d2 = reflect_vertical(d1)
	    return add(d1, d2)

	def inner_square():
	    return invert(square)

	def reconstructed():
	    return intersect(make_x(), inner_square())
\end{verbatim}

\subsection{Complex reconstruction - Experiment 2}

Composite 4 sample reconstruction:

\begin{verbatim}
	def make_x():
	    d1 = diagonal
	    d2 = reflect_vertical(d1)
	    return add(d1, d2)

	def make_double_vertical():
	    return add(line_vertical, reflect_vertical(line_vertical))

	def make_all_ones():
	    return invert(blank)

	def make_center_pixel():
	    return intersect(diagonal, line_horizontal)

	def make_extra_pixel():
	    return intersect(line_vertical, reflect_horizontal(line_horizontal))

	def reconstructed():
	    full = make_all_ones()
	    no_x = subtract(full, make_x())
	    return add(add(no_x, make_center_pixel()), make_extra_pixel())
\end{verbatim}

\section{Search Space Complexity}\label{si:search_size}

We assume a DSL over primitives $\mathcal{X}$ and transformations $\mathcal{T}$. Common to the experiments, the specific DSL consists of 6 geometric primitives ($|\mathcal{X}| = 6$), 3 binary transformation operators, and 4 unary transformation operators ($|\mathcal{T}| = 7$ total) (see Figure \ref{si:dsl}). 

\begin{figure}[ht]
  \begin{center}
    \includegraphics[width=0.9\columnwidth, height=0.15\textheight,keepaspectratio]{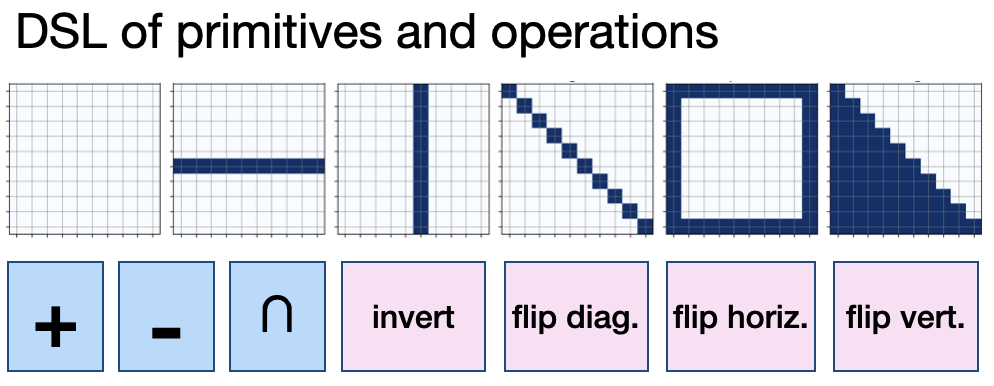}
  \end{center}
  \caption{The DSL of transformation and geometric primitives used in the two experiments consists of 6 geometric primitives, 3 binary operations, and 4 unary operations.}
  \label{si:dsl}
\end{figure}

Programs are represented as binary trees, where internal nodes correspond to binary operators, unary operators are applied to a single child, and leaves correspond to primitives or library helpers. 

We show that the number of syntactically distinct programs grows rapidly with tree depth, necessitating library learning.

\subsection{Search Space without library search}

At
depth $d$, the number of candidate programs $|\mathcal{H}_d|$ is bounded below by the number of full binary trees of depth $d$ with leaves drawn from $\mathcal{X}$:

\begin{equation}
    |\mathcal{H}_d| \;\geq\; |\mathcal{X}|^{2^d} \cdot
    |\mathcal{T}_{\mathrm{bin}}|^{2^d - 1}
\end{equation}

\noindent where $|\mathcal{T}_{\mathrm{bin}}| = 3$ is the number of binary operators. Even restricting to programs of depth $d \leq 4$, this lower bound exceeds $10^6$ candidate programs. In practice the space is larger still, as unary operators can be applied at any internal node, further multiplying the branching factor. Bottom-up
enumeration with observational equivalence pruning
\citep{albarghouthi2013recursive, udupa2013transit} reduces this space by collapsing programs that produce identical outputs on the empty canvas, but the effective search space remains super-exponential in program depth. This motivates the failure of the no-library baseline on later trials where target programs require depth exceeding 3. Without library learning to compress the search space the synthesizer exhausts its computational budget before reaching programs of sufficient depth to reconstruct complex targets.

\subsection{Characterizing the size of reachable space in PBT using stochastic exploration}\label{si:randomwalk}

To establish that the PBT domain is non-trivially large, we characterized the space reachable by unguided search. If a stochastic exploration over the program space finds thousands of unique patterns, this would show that the reachable space is sufficiently large to make the search hard, and motivate curriculum-based structure in our experimental procedure. 

We ran a symmetry-biased random walk over the DSL for 5 million steps. The walk maintains a bounded pool of programs; at each step, a program is sampled from the pool and a randomly selected operator is applied to produce a candidate. Candidates are accepted only if their rendered output has not been seen before and the output is symmetric across at least one axis, otherwise the candidate is discarded. The symmetry constraint was imposed to produce a subset of the reachable space that is more likely to be visually interesting by expressing a simple gestalt prior.

When the pool is full and a new program must be admitted, the longest program in the pool is evicted, keeping the pool biased toward shorter programs. The reported count is therefore a lower bound on true program diversity: the walk tallies only visually distinct, symmetric outputs, and the eviction policy further restricts exploration to the moderate-length region of the space.

\textbf{Discovery curve.} The walk discovered 8791 unique programs within $5 \times 10^6$ steps (Figure \ref{si:discovery}) with no indication of saturation. The distribution of discovered programs by program length (Figure \ref{si:prog_dist}) clarifies why: the eviction policy keeps shorter programs in the pool, compositions of those tend to produce symmetric outputs concentrated around 65–75 nodes, and the space at that length scale is large enough that the walk continues finding novel outputs at an accelerating rate throughout. Programs longer than roughly 150 nodes are discovered rarely, because they are evicted from the pool before they can be extensively explored. This further establishes that the reported count is a lower bound on the theoretical space size as the walk has not come close to exhausting even the moderate-length region of the space. The longer-program tail remains almost entirely unsampled.

\begin{figure}[ht]
  \begin{center}
    \includegraphics[width=\textwidth, height=0.7\textheight,keepaspectratio]{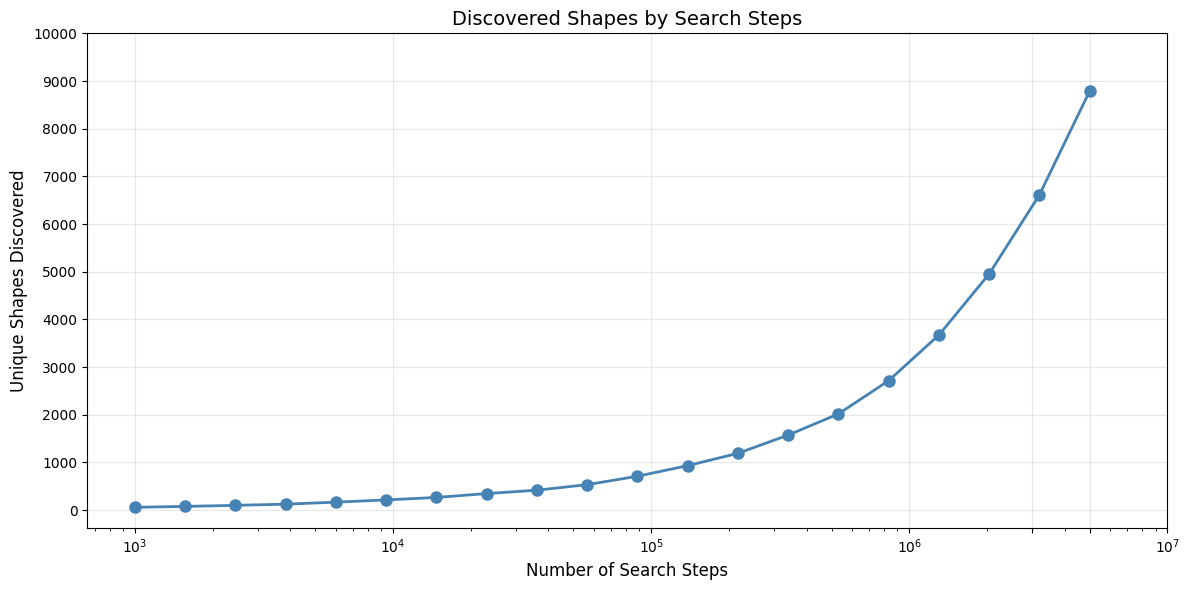}
  \end{center}
  \caption{Number of unique patterns discovered using stochastic explorations of the PBT domain using the DSL in Figure \ref{si:dsl}.}
  \label{si:discovery}
\end{figure}

\begin{figure}[ht]
  \begin{center}
    \includegraphics[width=\textwidth, height=0.7\textheight,keepaspectratio]{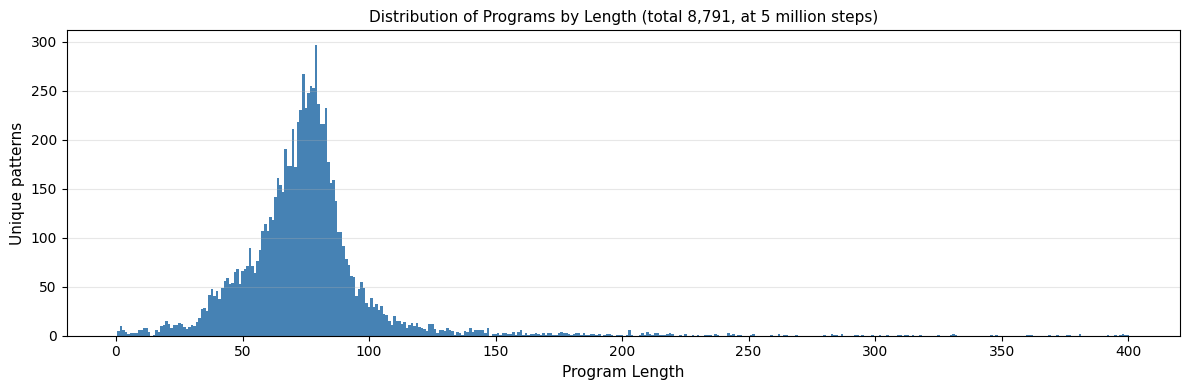}
  \end{center}
  \caption{Distribution of program lengths generated via random search, measured in number of raw DSL primitives.}
  \label{si:prog_dist}
\end{figure}

\clearpage

\textbf{Program diversity.} Figure \ref{si:shapemosaic} shows a random mosaic of 225 sampled outputs. Without any curriculum structure to guide the search, a symmetry-biased random walk over the DSL discovers 8791 distinct visual programs of varying program lengths. This makes two points concrete. First, library learning algorithms operating in this domain face a candidate space far larger than either experiment alone would suggest, even under the conservative lower-bound conditions of this random walk. Second, the structured curricula in Experiments 1 and 2 are not arbitrary samples from the reachable space: they are deliberate selections designed to expose latent generative structure that unguided search does not reliably surface. The visual heterogeneity of the mosaic confirms that this DSL underlies a large and varied space, where curriculum design is doing principled work.

\begin{figure}[ht]
  \begin{center}
    \includegraphics[width=\textwidth, height=0.7\textheight,keepaspectratio]{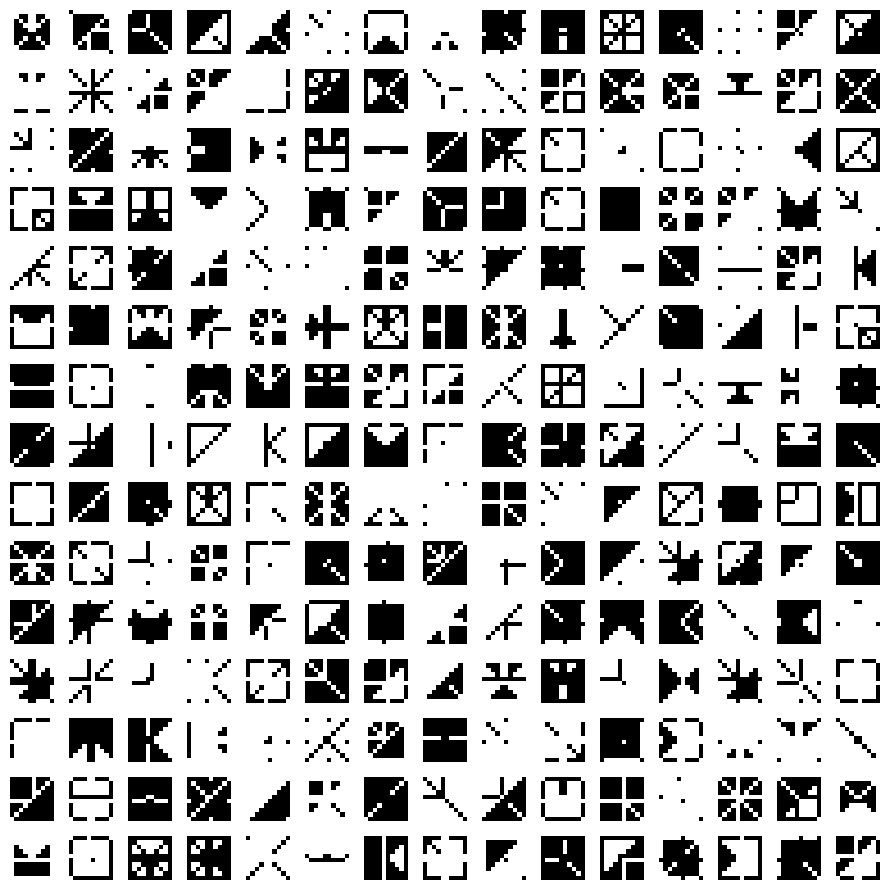}
  \end{center}
  \caption{225 shapes (2.6\%) sampled from the 8,791 shapes generated by random walk with $5 \times 10^6$ steps.}
  \label{si:shapemosaic}
\end{figure}

\subsection{Program length distributions across curricula} 


\section{Sequence Meta-structure in Experiment Design}\label{si:curriculum}

\textbf{Definition:} A \textbf{curriculum} is defined as an ordered sequence of targets $\mathbf{P} = (p_1, p_2, \ldots, p_T)$ presented on consecutive trials.

We next consider different ways in which a curriculum can be structured.

\textbf{Definition:}
We define \textbf{Meta-Structure} $\mathcal{G}$ of a curriculum $\mathbf{P}$ as
\begin{quote}
a generative process that imposes structured dependencies on solutions, either through preceding solutions or shared latent sub-programs,  specifying how each $p_t$ can be generated given a subset of the sequence $(p_{t-n}, \ldots, p_{t-1})$, where $n$ specifies how many steps back the dependencies extend.
\end{quote}

\textbf{Definition:}
We define a \textbf{sequential} curriculum as
\begin{quote}
a process by which each solution is obtained by applying a transformation $\tau \in \mathcal{T}$ to a preceding solution $p_{t-n}$ and primitive $x \in \mathcal{X}$:
\begin{equation}
   \mathcal{G}_{\mathrm{seq}}: \quad p_{t} = \tau(p_{t-n},\, x),
    \qquad \tau \in \mathcal{T},\ x \in \mathcal{X},\ 0 < n < t.
\end{equation}
\end{quote}

\textbf{Definition:}
We define a \textbf{long-range} curriculum as
\begin{quote}
a sequential curriculum in which dependencies extend beyond the immediately preceding trial, i.e.\ $n > 1$ in $\mathcal{G}_{\mathrm{seq}}$.
\end{quote}

The distinction between sequential and long-range curricula affects how readily the latent meta-structure $\mathcal{G}$ can be inferred: sequential dependencies ($n = 1$) are local and discoverable from adjacent trials, whereas long-range dependencies ($n > 1$) require 
the learner to integrate information across a broader temporal window. Both forms of meta-structure are expected to elicit the addition of final targets as helpers.

\textbf{Definition:}
We define a \textbf{shared helper} curriculum as a generative procedure which derives programs from a common reusable sub-program $h \in \mathcal{P}$. Formally,
\begin{quote}
a group of $K$ consecutive programs $\{p_{t}, \ldots, p_{t+K-1}\}$ follows a shared helper curriculum with respect to $h$ if $h$ appears as a sub-program of every solution in the group:
\begin{equation}
    \mathcal{G}_{\mathrm{hlp}}: \quad h \sqsubseteq p_{t+k},
    \qquad k = 0, \ldots, K-1,
\end{equation}
where $\sqsubseteq$ denotes the sub-program relation.
\end{quote}

Next, we generalize this structure.

\textbf{Definition:}
We define an \textbf{operator-group} curriculum as
\begin{quote}
a process by which a group of $K$ consecutive programs $\{p_{t}, \ldots, p_{t+K-1}\}$ are all generated by applying distinct transformations from a fixed operator set, in a given order and exactly once, to a shared sub-program $h \in \mathcal{P}$ and a primitive $x \in \mathcal{X}$.
\end{quote}

\textbf{Definition:}
We define a specific \textbf{4-operator-group} curriculum where
\begin{quote}
given a shared sub-program $h \in \mathcal{P}$ and a primitive $x \in \mathcal{X}$, the group consists of four programs generated by a fixed set of operators:
\begin{equation}
\begin{split}
    \mathcal{G}_{\mathrm{grp}}: \quad \{p_{t+k}\}_{k=0}^{3} = \bigl\{\,
        &\textsc{Add}(h,\, x),\quad
        \textsc{Subtract}(h,\, x),\\
        &\textsc{Overlap}(h,\, x),\quad
        \textsc{Add}\!\bigl(\textsc{Invert}(h),\, x\bigr)
    \,\bigr\},
\end{split}
\end{equation}
subject to the constraint that $h$ is a parsimonious shared structure: no target in the group can be derived from any other target in the group by a program shorter than that prescribed by $\mathcal{G}_{\mathrm{grp}}$.
\end{quote}

The operator-group meta-structure generalises straightforwardly to different operator sets. 

\subsection{Experiment 1 Meta-Structure and trial sequence}

The goal of Experiment 1 is to establish that people rely on reusable helpers, as predicted by library learning models.

\begin{table}[ht]
\centering
\begin{tabular}{lll}
\hline
\textbf{Pattern} & \textbf{Program} & \textbf{Type} \\
\hline
$P_1$ & \texttt{fat\_cross()} & - \\
$P_2$ & \texttt{add($P_1$, square)} & Sequential \\
$P_3$ & \texttt{invert($P_2$)} & Sequential \\
\hline
$P_4$ & \texttt{subtract(square, Diagonal\_Cross)} & Helper \\
$P_5$ & \texttt{add(square, Diagonal\_Cross)} & Helper \\
$P_6$ & \texttt{invert($P_5$)} & Sequential \\
\hline
$P_7$ & \texttt{add(Diagonal\_Cross, $P_1$)} & Long range \\
$P_8$ & \texttt{intersect(Diagonal\_Cross, $P_1$)} & Long range \\
\hline
$P_9$  & \texttt{add($P_8$, square)} & Sequential \\
$P_{10}$ & \texttt{invert($P_9$)} & Sequential \\
$P_{11}$ & \texttt{add($P_8$, invert($P_1$))} & Long range \\
$P_{12}$ & \texttt{add(invert($P_{11}$), square)} & Sequential \\
\hline
$P_{13}$ & \texttt{subtract(Diagonal\_Cross, $P_8$)} & Long range \\
$P_{14}$ & \texttt{subtract($P_7$, $P_8$)} & Long range \\
\hline
\end{tabular}
\caption{Curriculum structure for Experiment~1. Each row specifies a target pattern ($P_t$), its program expression, and its structural type. 
Sequential trials depend on the immediately preceding target, and long range trials depend on earlier targets. Targets $P_4$ and $P_5$ derive from a common helper \texttt{Diagonal\_Cross}.}
\label{tab:curriculum_e1}
\end{table}

\begin{figure}[ht]
  \begin{center}
    \includegraphics[width=\textwidth, height=0.8\textheight,keepaspectratio]{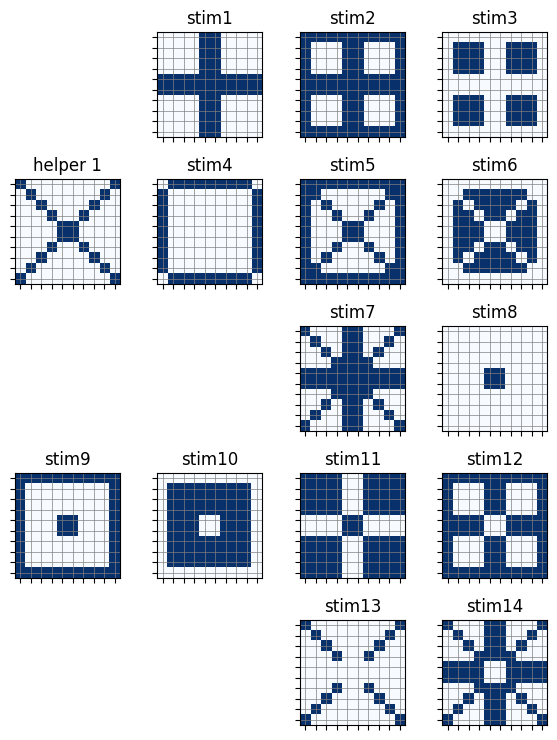}
  \end{center}
  \caption{Target sequence from Experiment 1.}
  \label{si:e1}
\end{figure}

\textbf{Prediction:}
In Experiment~1, the curriculum is dominated by sequential and long-range dependencies (Table~\ref{tab:curriculum_e1}), and therefore participants sensitive to this latent meta-structure are expected to increasingly retain target programs themselves as helpers.

\begin{figure}[ht]
  \begin{center}
    \includegraphics[width=\textwidth, height=0.7\textheight,keepaspectratio]{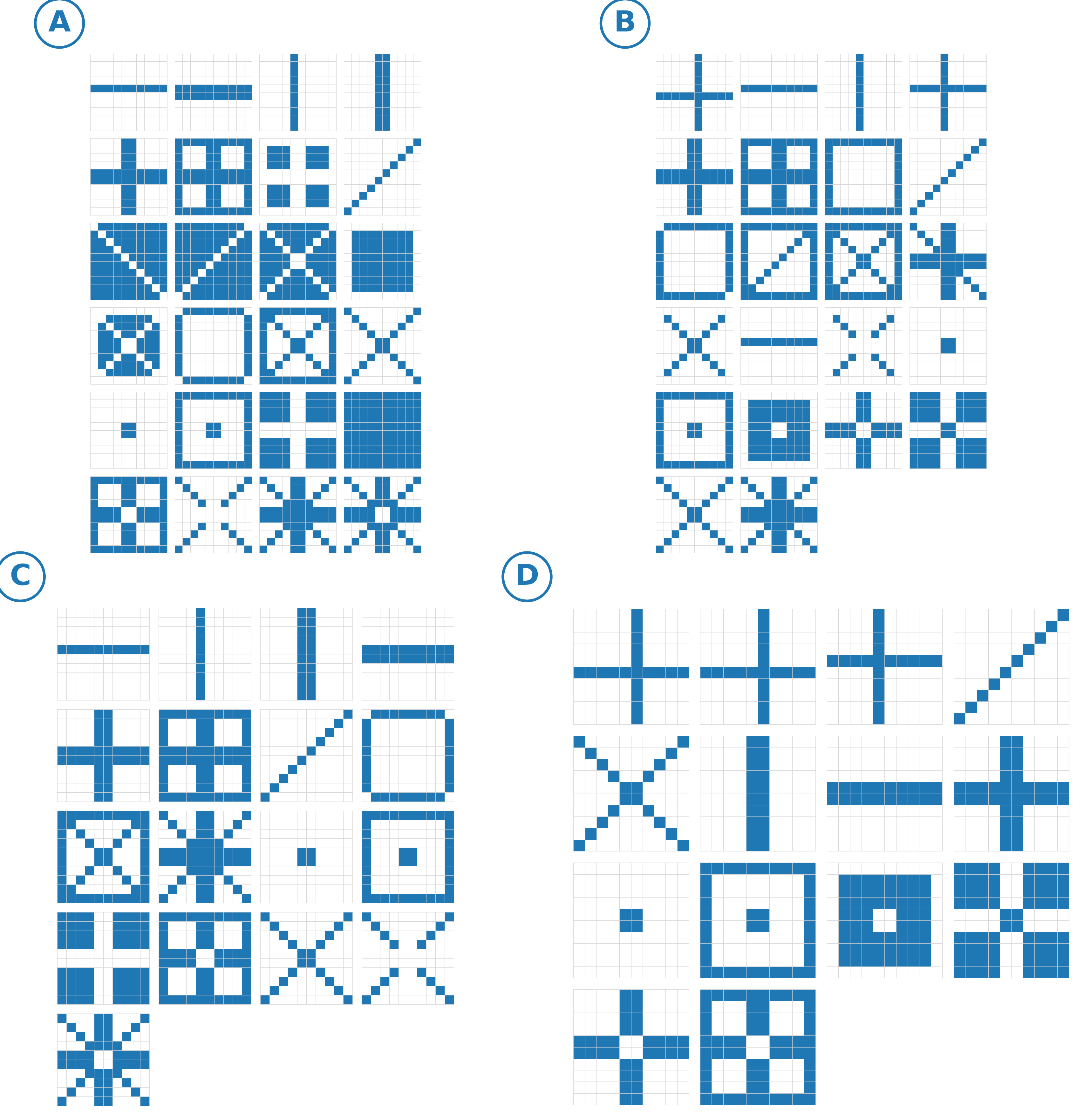}
  \end{center}
  \caption{Examples of helpers made by individual users in Experiment 1. Panels A, B, C, D show helpers from different users. Each user's library becomes biased toward including the final targets as helpers as the experiment progressed.}
  \label{si:e1_helpers}
\end{figure}

\clearpage

\subsection{Experiment 2 Meta-Structure and trial sequence}

Our goal is to show that people are sensitive to the latent generative process itself (the meta-structure). To do this, Experiment~2 uses a target sequence generated using \textbf{4-operator-group} curriculum. 
The same meta-structure is used for every target in Experiment 2.
Here each trial is generated using a simple transformation of one helper.

\begin{figure}[ht]
  \begin{center}
    \includegraphics[width=\textwidth, height=0.7\textheight,keepaspectratio]{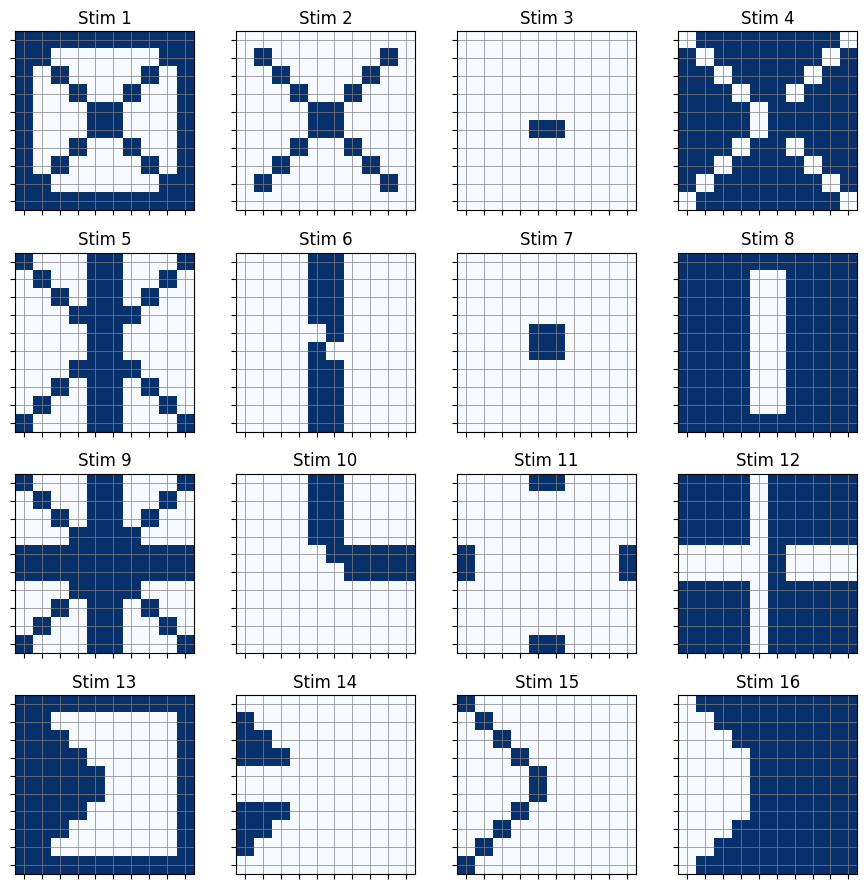}
  \end{center}
  \caption{Target sequence from Experiment 2.}
  \label{si:e2}
\end{figure}

\clearpage

\begin{figure}[ht]
  \begin{center}
    \includegraphics[width=\textwidth, height=0.7\textheight,keepaspectratio]{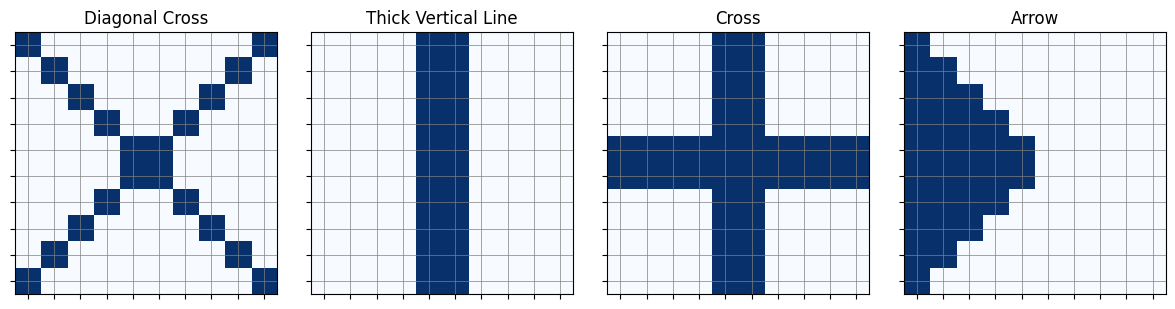}
  \end{center}
  \caption{Helpers used in 4-operator-group by design for Experiment 2.}
  \label{si:e2_helpers}
\end{figure}

\textbf{Prediction 1:} As it takes a few examples to infer the underlying meta-structure, helpers people are expected to make will diverge from by-design helpers, but should increasingly become biased toward the ground-truth set.

\textbf{Prediction 2:} Helpers people make will be biased toward representing intermediate structure rather than final targets, as in E1.




\begin{figure}[ht]
  \begin{center}
    \includegraphics[width=\textwidth, height=0.7\textheight,keepaspectratio]{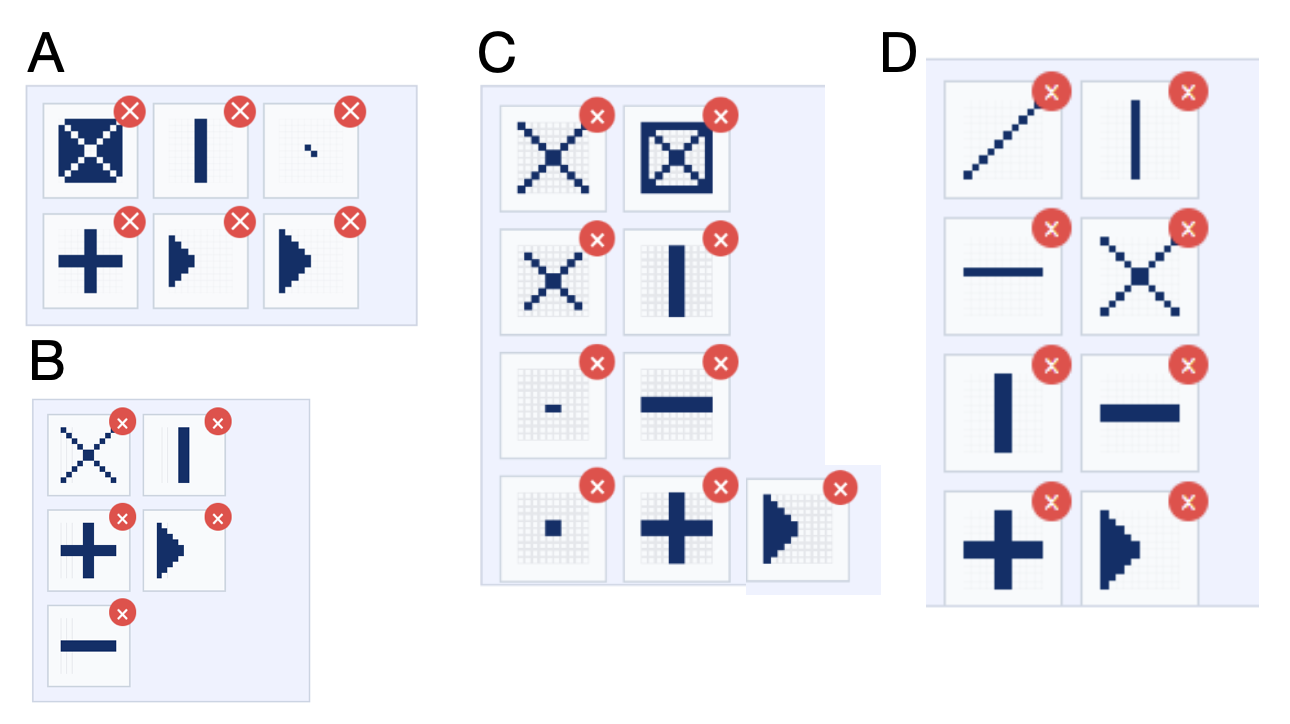}
  \end{center}
  \caption{Examples of helpers made by individual users in Experiment 2. Panels A, B, C, D show helpers from different users. Each user's library is biased toward including the by-design helpers (in contrast to final targets).}
  \label{sample-figure}
\end{figure}

\clearpage

\newpage

\end{document}

%% file: tmp.tex
We study the complexity of learning a single reusable abstraction (helper)
that maximizes compression utility over a corpus of programs. We show that
even a highly restricted version of this problem is NP-complete.

\par\noindent
\paragraph{High-level intuition.}
The restricted setting removes almost all sources of complexity usually
associated with library learning. Programs have no internal structure:
each program is just a flat tuple of symbols, and a helper can only specify
that certain coordinates must take fixed values. Thus, learning a helper
reduces to selecting a subset of positions that agree across many programs.

Despite this extreme simplification, the problem remains hard. The reason is
that the learner must jointly choose (i) which positions to constrain, and
(ii) which programs will satisfy those constraints. These two choices
interact combinatorially: constraining more positions makes matches rarer,
while relaxing constraints increases coverage. This trade-off mirrors dense
substructure problems such as biclique detection.

The general version of library learning adds further complexity. Programs
are no longer flat tuples but trees, so a helper must identify a reusable
subtree rather than fixed positions. Variables (or \emph{holes}) allow parts
of the helper to vary across occurrences, so the same pattern can match
many different concrete subprograms (e.g., the same structure with different
inputs)~\cite{dechter2013bootstrap,ellis2023dreamcoder,cao2023babble,bowers2023top}. This increases flexibility but also enlarges the search
space. Moreover, matching is no longer exact: a helper may apply at many
different locations inside a program, and one must search for all places
where the pattern fits~\cite{bowers2023top}. The result below shows that even before these
additional layers, the core combinatorial problem is already NP-hard.

\par\noindent
\paragraph{Syntax and programs.}
Let $\Sigma$ be a finite set of symbols, and let $f$ be a fixed $n$-ary
constructor. Every program is defined as
\[
p \equiv f(c_1, \dots, c_n),
\]
where each $c_i \in \Sigma$. The arity $n$ is fixed and identical for all
programs.

\par\noindent
\paragraph{Corpus.}
We are given a corpus
\[
\mathcal C \equiv \{p^{(1)}, \dots, p^{(N)}\},
\]
where each program is written as
\[
p^{(j)} \equiv f(c_1^{(j)}, \dots, c_n^{(j)}).
\]

\par\noindent
\paragraph{Helpers (abstractions).}
A reference tuple $(a_1, \dots, a_n) \in \Sigma^n$ is part of the input.
A helper is defined by a subset of positions
\[
S \subseteq \{1, \dots, n\}.
\]

A program $p^{(j)}$ matches $S$ if
\[
\forall i \in S,\quad c_i^{(j)} = a_i.
\]

Let
\[
\mathrm{occ}(S) = \#\{ j : p^{(j)} \text{ matches } S \}, 
\]  the number of corpus programs consistent with the helper.

Define the utility
\[
U(S) = |S| \cdot \mathrm{occ}(S).
\]

\par\noindent
\paragraph{Example.}
Let $\Sigma = \{A, B, X, Y\}$ and consider programs of arity $3$:
\[
\begin{array}{c|ccc}
 & 1 & 2 & 3 \\
\hline
p^{(1)} & A & A & X \\
p^{(2)} & A & A & Y \\
p^{(3)} & A & B & X \\
\end{array}
\]
Fix the reference tuple $(A, A, X)$.

If $S = \{1,2\}$, then $p^{(1)}$ and $p^{(2)}$ match, so
$\mathrm{occ}=2$ and $U=4$.
If $S = \{1\}$, all programs match, so $U=3$.

\par\noindent
\paragraph{Decision problem.}
\begin{quote}
\textsc{Best-Single-Helper} \\
\textbf{Input:} A corpus $\mathcal C$, a reference tuple $(a_1,\dots,a_n)$, and an integer $\omega$. \\
\textbf{Question:} Does there exist $S \subseteq \{1,\dots,n\}$ such that
\[
U(S) \ge \omega \, ?
\]
\end{quote}

\begin{proposition}
\textsc{Best-Single-Helper} is NP-complete.
\end{proposition}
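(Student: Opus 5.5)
The plan is a polynomial-time reduction from \textsc{Maximum Edge Biclique} in bipartite graphs. That problem asks: given a bipartite graph $G=(U,V,E)$ and an integer $\omega$, do there exist $R\subseteq U$ and $S\subseteq V$ with $R\times S\subseteq E$ and $|R|\cdot|S|\ge\omega$? It is NP-complete by Peeters (2003). This matches the biclique intuition given above for \textsc{Best-Single-Helper}.

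\emph{Membership in NP.} A certificate is a subset $S\subseteq\{1,\dots,n\}$. The verifier scans all $N$ programs, computes $\mathrm{occ}(S)$ by checking $c_i^{(j)}=a_i$ for every $i\in S$, and then compares $|S|\cdot\mathrm{occ}(S)$ with $\omega$. This takes $O(Nn)$ time.

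\emph{Construction.} Let $U=\{u_1,\dots,u_N\}$ and $V=\{v_1,\dots,v_n\}$, and take $\Sigma=\{A,B\}$. For each $u_j$ we create a program $p^{(j)}=f(c_1^{(j)},\dots,c_n^{(j)})$ with
\[
c_i^{(j)}=A \text{ if } (u_j,v_i)\in E, \qquad c_i^{(j)}=B \text{ otherwise}.
\]
The reference tuple is $(A,\dots,A)$ and the threshold $\omega$ is kept unchanged. The construction is clearly polynomial. We may assume $\omega\ge 1$, since otherwise both instances are trivially yes-instances.

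\emph{Correctness.} The key observation is that $p^{(j)}$ matches $S$ exactly when $u_j$ is adjacent to every $v_i$ with $i\in S$. So for a fixed $S$, the matching programs form the largest $R\subseteq U$ such that $R\times S\subseteq E$, and $\mathrm{occ}(S)=|R|$.

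For the forward direction, suppose a biclique $(R,S)$ has $|R||S|\ge\omega$. Every $u_j\in R$ matches $S$, so $U(S)\ge|S||R|\ge\omega$.

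For the converse, suppose $U(S)\ge\omega\ge1$. Then $S\neq\emptyset$ and $\mathrm{occ}(S)\ge1$. Let $R$ be the set of matching rows. Then $(R,S)$ is a biclique with $|R||S|=U(S)\ge\omega$.

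\emph{Main obstacle.} The main point of care is choosing the right hardness source. The reduction must go from the \emph{edge}-maximizing biclique problem, which is NP-complete by Peeters, and not from the vertex-maximizing version, which is polynomial via bipartite matching. Because $U(S)$ is a product of the two side sizes, it corresponds exactly to the edge count $|R|\cdot|S|$. The only remaining detail is the degenerate case $S=\emptyset$ or $R=\emptyset$, and this is excluded by the assumption $\omega\ge1$.
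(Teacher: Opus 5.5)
Your proof is correct and follows the paper's route: both reduce from \textsc{Maximum-Edge-Biclique}, index positions by one side of the bipartition and programs by the other, and note that a program matches $S$ exactly when its vertex is adjacent to all of $S$, so that $U(S)$ equals the edge count of the induced biclique. The differences are only simplifications. The paper makes $M=n+1$ copies of each program, uses fresh non-edge symbols, and scales the threshold to $Mk$; your single-copy version over $\Sigma=\{A,B\}$ with the threshold unchanged works equally well and shows hardness already for a binary alphabet.
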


\begin{proof}

\par\noindent
\paragraph{Membership in NP.}
Given a subset $S$, we can compute $\mathrm{occ}(S)$ by scanning all programs
and checking the constraint at each position, which takes time $O(Nn)$.
Thus $U(S)$ can be computed in polynomial time.

\par\noindent
\paragraph{High-level idea of the reduction.}
We reduce from \textsc{Maximum-Edge-Biclique}: given a bipartite graph,
find subsets $S \subseteq L$ and $T \subseteq R$ such that all edges between
them exist, maximizing $|S|\cdot |T|$. The decision version asks whether
there exists such a biclique with $|S|\cdot|T| \ge k$ , which is known to be NP-complete \citep{peeters2003maximum}.

Positions will correspond to vertices in $L$, and programs to vertices in $R$.
Selecting positions enforces that only programs adjacent to all those vertices
can match, yielding a biclique structure.

\par\noindent
\paragraph{Reduction.}
Let $G = (L \cup R, E)$ be a bipartite graph with
\[
L = \{u_1, \dots, u_n\}, \quad R = \{v_1, \dots, v_m\},
\]
and let $k$ be the target.

Set $M \equiv n + 1$.

\par\noindent
\paragraph{Construction.}
We construct a corpus $\mathcal C$ of size $mM$.
Let $\Sigma$ contain symbols $a_1, \dots, a_n$ and fresh symbols
$b_{ij\ell}$.

For each $v_j \in R$ and each $\ell \in \{1,\dots,M\}$, define
\[
p^{(j,\ell)} \equiv f(c_1^{(j,\ell)}, \dots, c_n^{(j,\ell)}),
\]
where
\[
c_i^{(j,\ell)} =
\begin{cases}
a_i & \text{if } (u_i, v_j) \in E, \\
b_{ij\ell} & \text{otherwise}.
\end{cases}
\]

Set $\omega \equiv Mk$.

This construction is polynomial in the size of $G$.

\par\noindent
\paragraph{Example (matrix view of the construction).} Consider a bipartite graph with 

\[ 
L = \{u_1, u_2\}, \quad R = \{v_1, v_2\}, 
\] 

and edges 

\[ 
(u_1,v_1), (u_2,v_1), (u_1,v_2). 
\]

Thus, $(u_2, v_2) \notin E$. Let $M = 3$. The construction builds a corpus of programs of arity $2$. We can represent it as: 

\[ 
\begin{array}{c|cc} & i=1 & i=2 \\ \hline p^{(1,1)} & a_1 & a_2 \\ p^{(1,2)} & a_1 & a_2 \\ p^{(1,3)} & a_1 & a_2 \\ \hline p^{(2,1)} & a_1 & b_{2,2,1} \\ p^{(2,2)} & a_1 & b_{2,2,2} \\ p^{(2,3)} & a_1 & b_{2,2,3} \\ \end{array} 
\]

\par\noindent
\paragraph{Forward direction (good biclique $\Rightarrow$good helper).}
Suppose there exists a biclique $S \subseteq L$, $T \subseteq R$
such that $|S|\cdot|T| \ge k$.
Let $S$ also denote the corresponding set of indices. For any $v_j \in T$ and any $i \in S$, we have $(u_i,v_j)\in E$,
so $c_i^{(j,\ell)} = a_i$ for all $\ell$.
Thus all $M$ programs associated with $v_j$ match $S$.

Therefore

\[
\mathrm{occ}(S) \ge M|T|, \quad
U(S) \ge M|S||T| \ge Mk = \omega.
\]

\par\noindent
\paragraph{Reverse direction (good helper$\Rightarrow$good biclique).}
Suppose there exists $S \subseteq \{1,\dots,n\}$ such that
\[
U(S) \ge Mk.
\]

Define
\[
S' = \{u_i : i \in S\}.
\]

A program $p^{(j,\ell)}$ matches $S$ if and only if for all $i \in S$,
we have $(u_i,v_j)\in E$. Therefore: If $v_j$ is adjacent to all vertices in $S'$, then all $M$ programs
  $p^{(j,\ell)}$ match $S$; Otherwise, if $v_j$ is missing an edge to some $u_i \in S'$, then for that $i$,
  $c_i^{(j,\ell)} \neq a_i$, so none of the programs $p^{(j,\ell)}$ match $S$.

Let
\[
T = \{v_j \in R : (u_i,v_j)\in E \ \forall u_i \in S'\}.
\]
Then
\[
\mathrm{occ}(S) = M|T|.
\]

Hence
\[
U(S) = |S|\cdot \mathrm{occ}(S) = M|S||T| \ge Mk,
\]
which implies $|S||T| \ge k$.
Thus $(S',T)$ forms a biclique of size at least $k$.

\par\noindent
\paragraph{Conclusion.}
We have shown a polynomial-time reduction from
\textsc{Maximum-Edge-Biclique}, and the problem is in NP,
so \textsc{Best-Single-Helper} is NP-complete.
\end{proof}